\newtheorem{theorem}{Theorem}[section]
\newtheorem{assumption}[theorem]{Assumption}
\newtheorem{problem}{Problem}
\newtheorem{definition}[theorem]{Definition}
\newtheorem{rem}[theorem]{Remark}
\begin{document}
\title{\LARGE \bf 
Making Infeasible Tasks Feasible: Planning to Reconfigure Disconnected 3D Environments with Movable Objects
}
\author{Samarth Kalluraya, Yiannis Kantaros
\thanks{
S. Kalluraya and Y. Kantaros are with the Department of Electrical and Systems Engineering, Washington University in St. Louis, St. Louis, MO, 63130, USA. This work was supported by the NSF CAREER award CNS \#2340417.
        {\tt\small \{k.samarth,ioannisk@wustl.edu\}}}%
}

\maketitle 
\begin{abstract}
\textcolor{black}{Several planners have been developed to compute \textcolor{black}{dynamically feasible, collision-free} robot paths 
from an initial to a goal configuration. A key assumption in these works is that the goal region is reachable; an assumption that often fails in practice when environments are disconnected. 
Motivated by this limitation, we consider known 3D environments comprising objects, \textcolor{black}{also called blocks, that form distinct navigable support surfaces (planes), and that are either non-movable (e.g., tables) or movable (e.g., boxes). These surfaces may be mutually disconnected} due to height differences, holes, or lateral separations. Our focus is on tasks where the robot must reach a goal region residing on an elevated plane that is  unreachable. Rather than declaring such tasks infeasible, an effective strategy is to enable the robot to interact with the environment, rearranging movable objects to create new traversable connections; a problem known as Navigation Among Movable Objects (NAMO). Existing NAMO planners typically address 2D environments, where obstacles are pushed aside to clear a path. These methods cannot directly handle the considered 3D setting; in such cases, obstacles must be placed strategically to bridge these physical disconnections. We address this challenge by developing \textbf{BRiDGE} (Block-based Reconfiguration in Disconnected 3D Geometric Environments), a sampling-based planner that incrementally builds trees over robot and object configurations to compute feasible plans specifying which objects to move, where to place them, and in what order, while accounting for a limited number of movable objects. To accelerate planning, we introduce non-uniform sampling strategies. We show that our method is probabilistically complete and we provide extensive numerical and hardware experiments validating its effectiveness.}

\end{abstract}

\IEEEpeerreviewmaketitle

\section{Introduction} \label{sec:Intro}
\IEEEPARstart{M}{otion} planning problems focus on computing collision‐free, dynamically feasible robot paths from an initial to a goal configuration~\cite{lavalle2006planning,kaelbling2013integrated,matni2024quantitative}.
\textcolor{black}{While many classical approaches—such as potential/vector fields~\cite{koditschek1990robot,vasilopoulos2022reactive,Vasilopoulos2022Hierarchical}, search‐based planners~\cite{koenig2006new,cohen2010search,nawaz2025graph}, and sampling‐based algorithms~\cite{kavraki1996probabilistic,karaman2011sampling,agha2014firm,janson2015fast,wang2025motion}}—assume that the goal is reachable under a fixed obstacle layout, real-world environments often violate this assumption. In such cases, instead of reporting task failure, robots must actively interact with their environment, rearranging movable objects to \emph{bridge} disconnected areas. 
Achieving this requires planning algorithms that can determine which obstacles to move, in what order, and where to place them. This problem is known as \emph{Navigation Among Movable Objects} (NAMO) \cite{stilman2005navigation} and has been shown to be NP-hard~\cite{Wilfong1988Motion}. 

Several planners have been proposed to address NAMO problems~\cite{stilman2005navigation, Wilfong1988Motion,stilman2007manipulation, stilman2008planning, moghaddam2016planning, nieuwenhuisen2008effective, van2010path, bayraktar2023solving, zhang2025namo,wu2010navigation, levihn2013hierarchical, levihn2013planning, renault2019towards, ellis2023navigation, armleder2024tactile}. These works primarily focus on 2D/planar environments, where movable objects only need to be pushed aside to clear a path to the goal. In other words, once obstacles are removed, the goal region becomes accessible.  Consequently, these methods cannot directly handle NAMO problems in 3D environments with multiple elevated planes; such environments are inherently disconnected due to both the geometry of the environment, i.e., the physical gaps between planes (e.g., step heights, holes on ground, or lateral separations) and the robot dynamics. 
In such settings, reaching the goal may require not only moving obstacles aside but also strategically placing them to act as \emph{bridges} between planes. Thus, simply removing movable objects does not guarantee a pathway. In general, multiple planes may need to be bridged to reach a goal region, and with a limited number of movable objects, some objects may need to be reused to construct these bridges.

\begin{figure}[t]
    \centering
    \includegraphics[width=\linewidth]{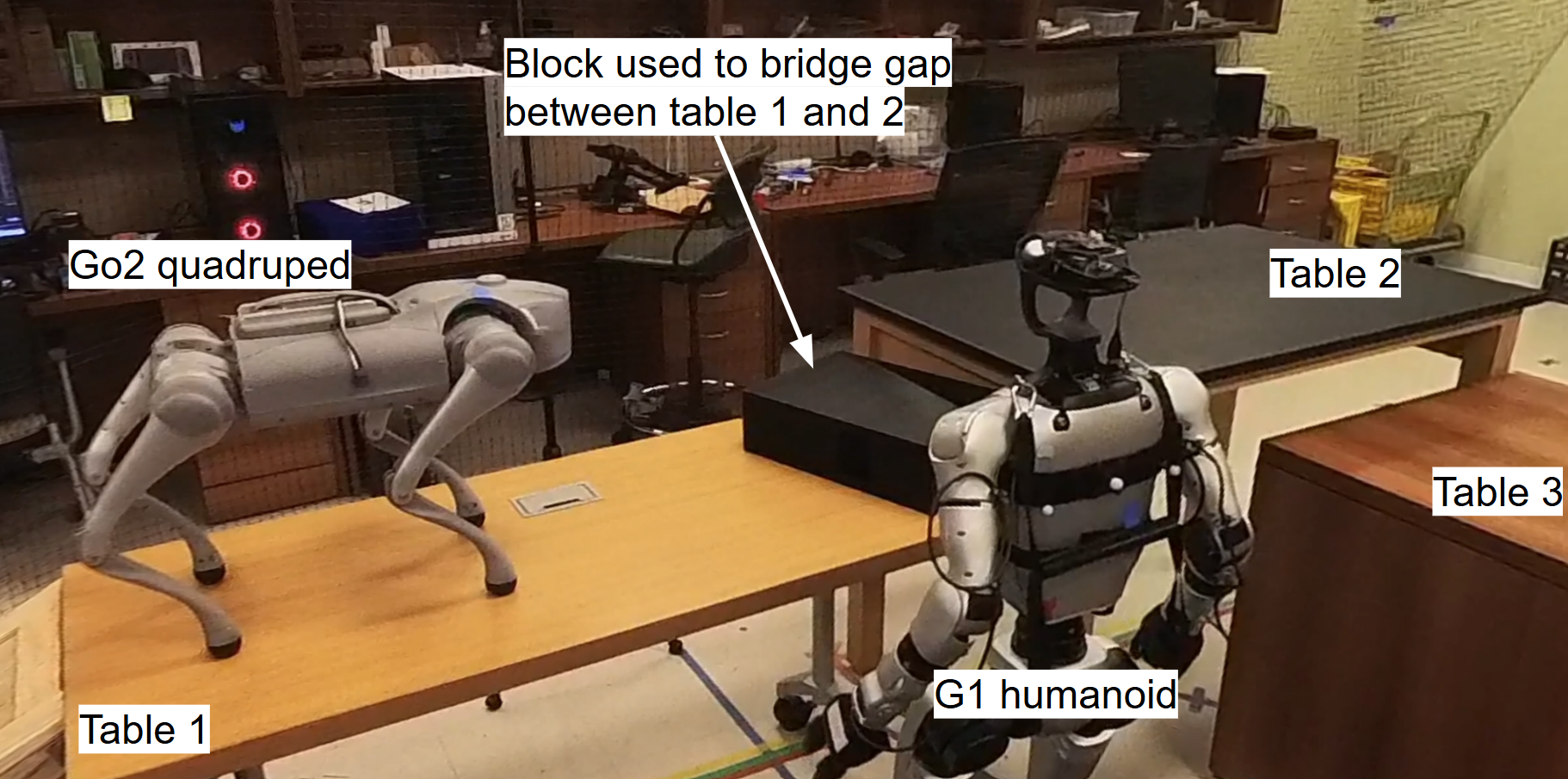}\vspace{-0.2cm}
    \caption{\textcolor{black}{A quadruped robot is tasked with reaching a goal region on Table 3; see Section \ref{subsec:cs3}. However, this requires traversing an elevated plane defined by Table 2. The quadruped’s walking policy does not allow it to climb from Table 1 to Table 2 due to the height difference. Instead of failing the task, a humanoid robot strategically places a block on Table 1 to bridge the gap between the tables for the quadruped.}  }\vspace{-0.4cm}
    \label{fig:motivatingExample}
\end{figure}

In this paper, we study a `\emph{bridge-building}' variant of NAMO in 3D environments. Specifically, we consider environments populated with objects (referred to as blocks) categorized as movable or non-movable (e.g., floor, tables, beds, stools). \textcolor{black}{The environment is assumed to be known, meaning that the locations of all blocks and their movability properties are given.} The top surfaces of these blocks define elevated planes differing in height, orientation, and layout. The robot’s task is to reach a goal region located on one of these surfaces. Due to dynamical constraints, such regions may not be directly reachable; for example, a quadruped robot may be unable to jump from the floor to a desk to access a desired object. In these cases, the robot must use movable objects as assets (and not just as obstacles to be moved out of the way) by rearranging them to bridge planes and thereby make the goal region accessible; \textcolor{black}{see also Fig. \ref{fig:motivatingExample}.}
\textcolor{black}{
A particular challenge in this problem setting is that objects required to bridge such gaps may themselves be inaccessible. For example, the robot may need a stool to climb onto a desk, but that stool may lie on a different elevated plane. In this case, the robot must first reason about how to use other reachable objects to access the stool. The difficulty increases when only a small number of reachable objects are available, requiring that the same objects be used strategically and possibly multiple times to bridge different gaps. These dependencies across obstructions/gaps necessitate long-horizon pick-and-place plans.
}

To address this challenge, we \textcolor{black}{introduce \textbf{BRiDGE} (Block-based Reconfiguration in Disconnected 3D Geometric Environments),} a sampling-based planning framework that incrementally builds trees over robot and block configurations to compute feasible long-horizon manipulation plans specifying which blocks to move, where to place them to bridge planes, and in what order. \textcolor{black}{To accelerate planning, \textcolor{black}{\textbf{BRiDGE} incorporates} non-uniform sampling strategies that prioritize tree growth toward potentially promising directions. These directions are determined by a high-level planner that generates symbolic plans specifying a sequence of pick-and-place actions for blocks, without reasoning about the geometric or dynamical feasibility of these actions or determining the exact placement poses of the blocks. In our implementation, we employ a Breadth-First-Search (BFS) planner and Large Language Models (LLMs) to generate these high-level plans, though other planners such as A* could also be used.
We emphasize that the symbolic plan may be infeasible or non-executable, e.g., because the high-level planner is not sound (e.g., LLMs) or because it cannot account for geometric and dynamical constraints (e.g., BFS or LLMs). Such incorrect plans may guide the tree into infeasible directions, potentially slowing down planning. To mitigate this, we provide mechanisms to refine the symbolic plan by re-invoking the high-level planner to replan and removing high-level actions that proved infeasible.
We show that \textcolor{black}{\textbf{BRiDGE}} is probabilistically complete meaning that the probability of finding a solution, if one exists, approaches one as the number of iterations of the sampling-based planner increases. This guarantee holds regardless of the choice of symbolic planner used to guide the sampling strategy. We provide extensive experiments demonstrating that our algorithm can solve complex, long-horizon planning problems with a limited number of objects, often requiring re-use of blocks to bridge multiple gaps. Moreover, we show empirically that our non-uniform sampling strategies significantly accelerate planning compared to uniform sampling. Finally, we conduct hardware experiments showing how \textcolor{black}{\textbf{BRiDGE}}, originally designed for a single robot, can be applied to a team of two robots (a quadruped and a humanoid) with complementary navigation and manipulation skills that collaborate to  render the goal region accessible to one of them; see Fig. \ref{fig:motivatingExample}.}

\begin{figure*}[t]
    \centering
    \subfigure[]{\label{fig:case1a}\includegraphics[width=0.23\textwidth]{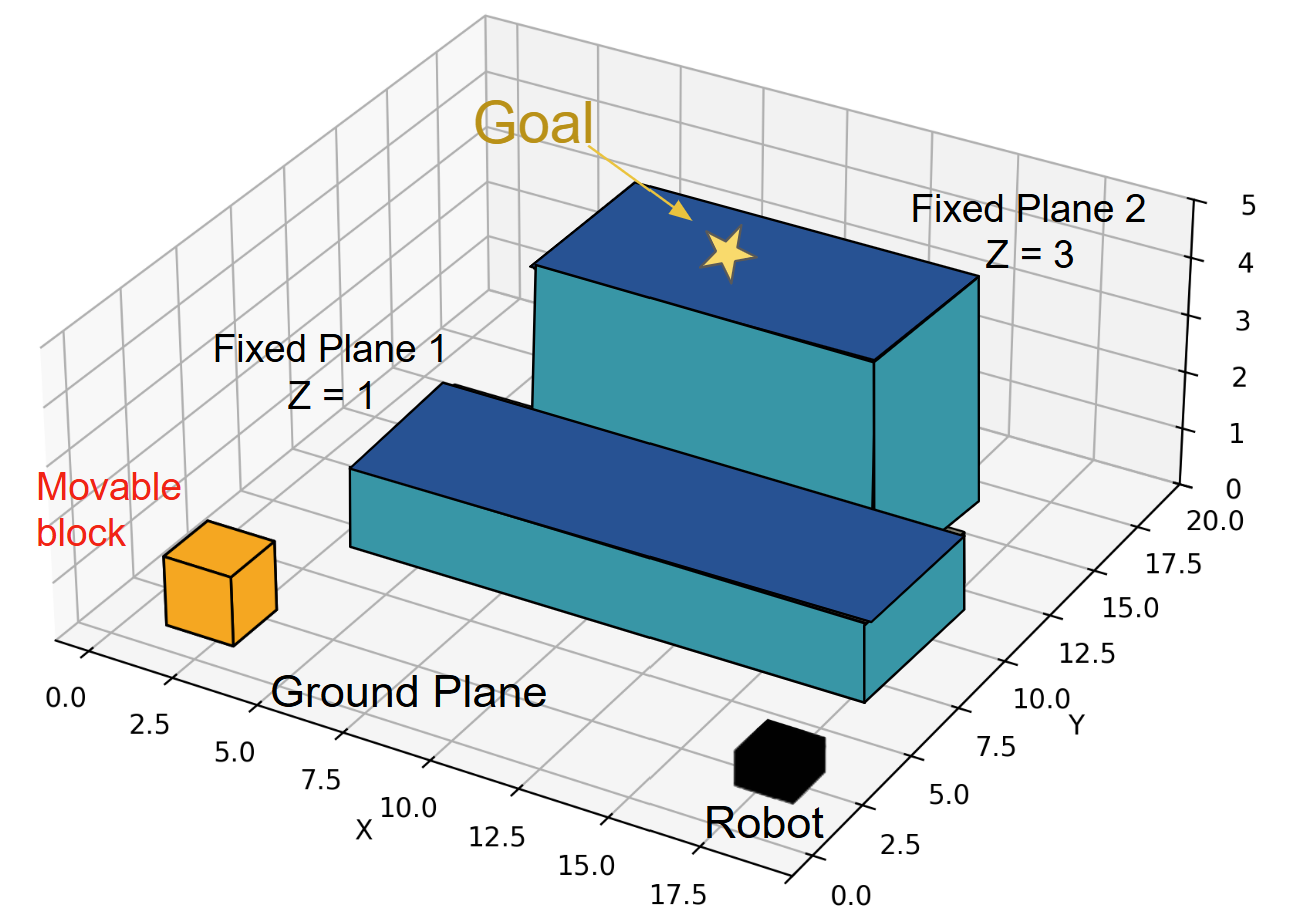}} \hfill
    \subfigure[]{\label{fig:case1b}\includegraphics[width=0.23\textwidth]{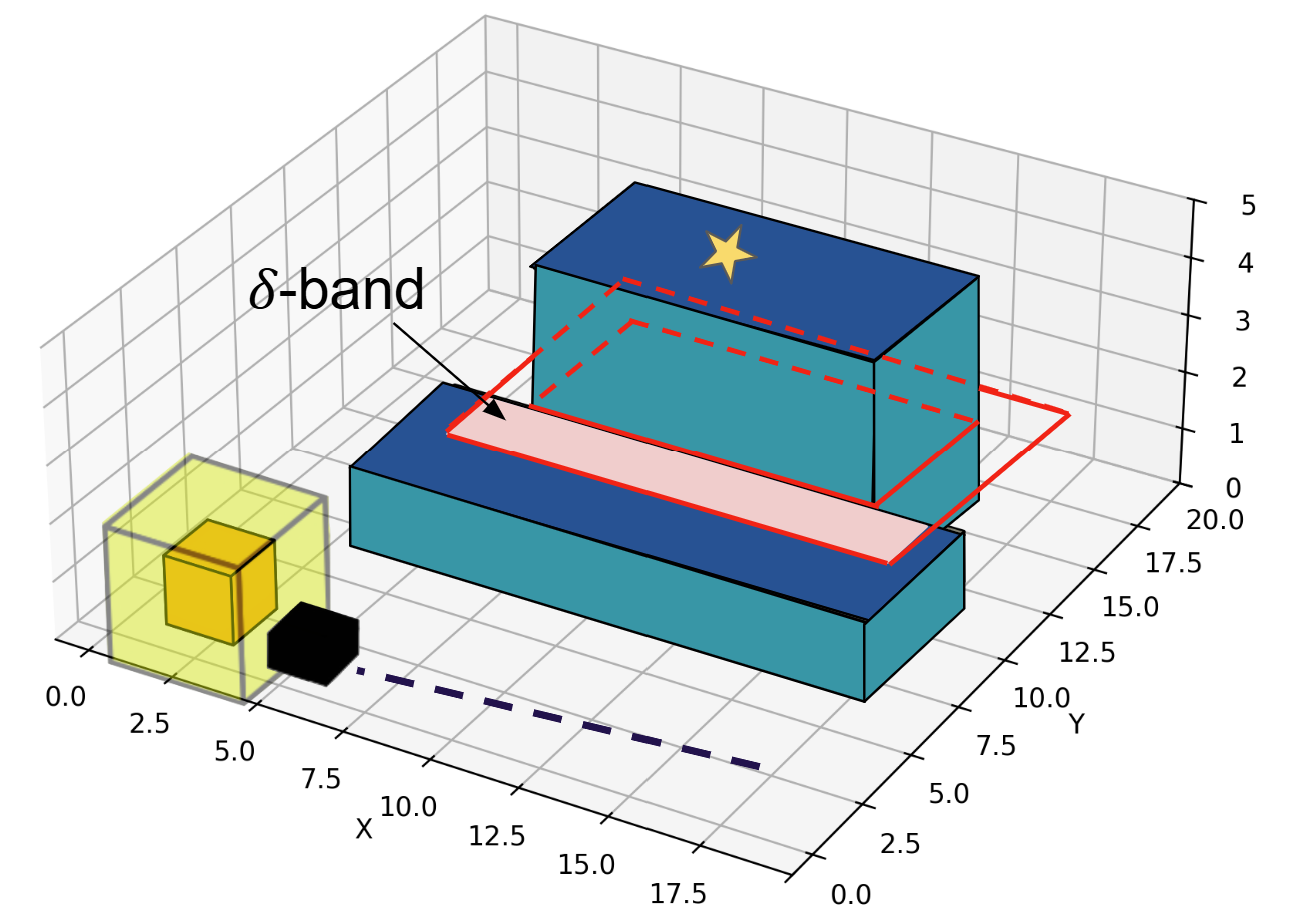}} \hfill
    \subfigure[]{\label{fig:case1c}\includegraphics[width=0.23\textwidth]{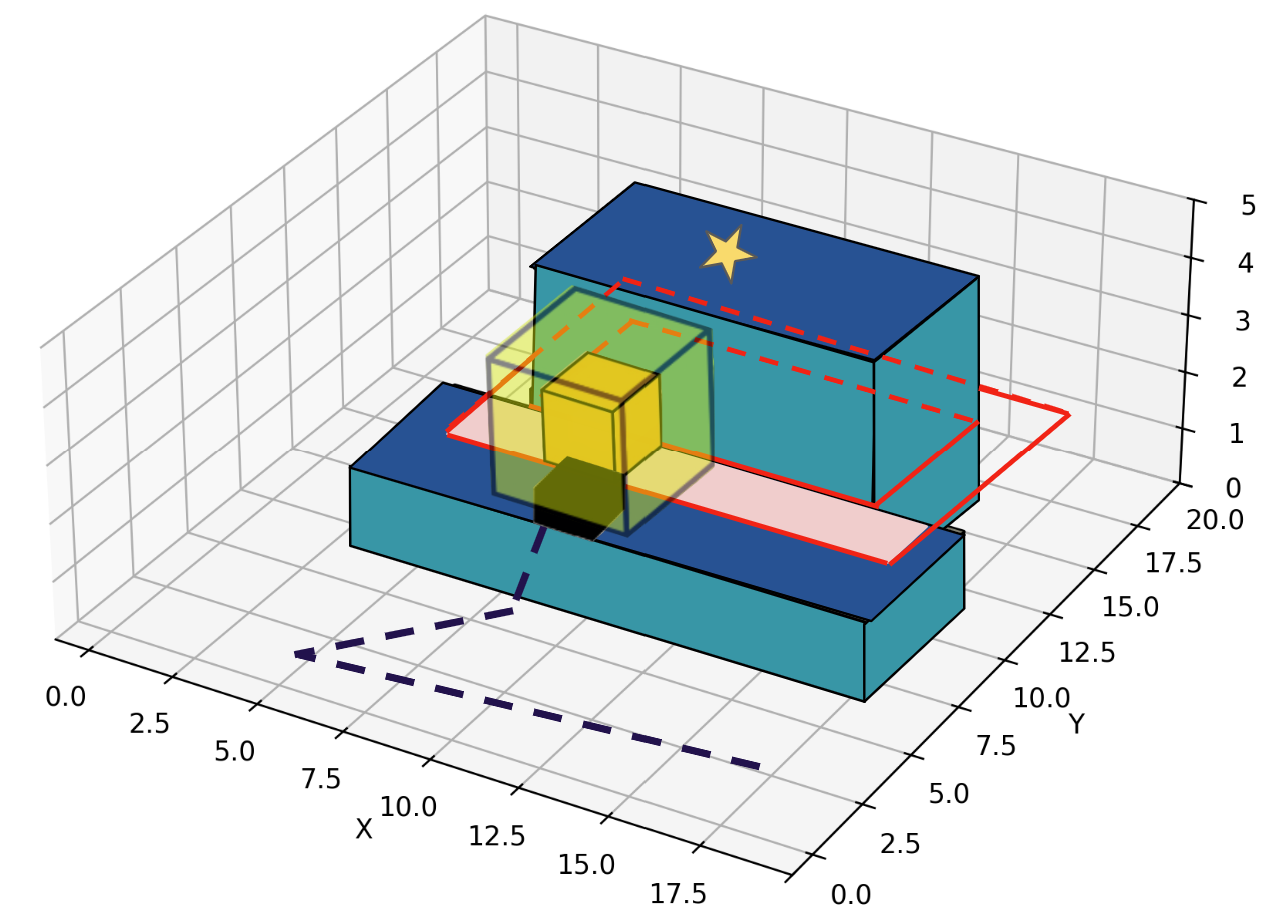}} \hfill
    \subfigure[]{\label{fig:case1d}\includegraphics[width=0.23\textwidth]{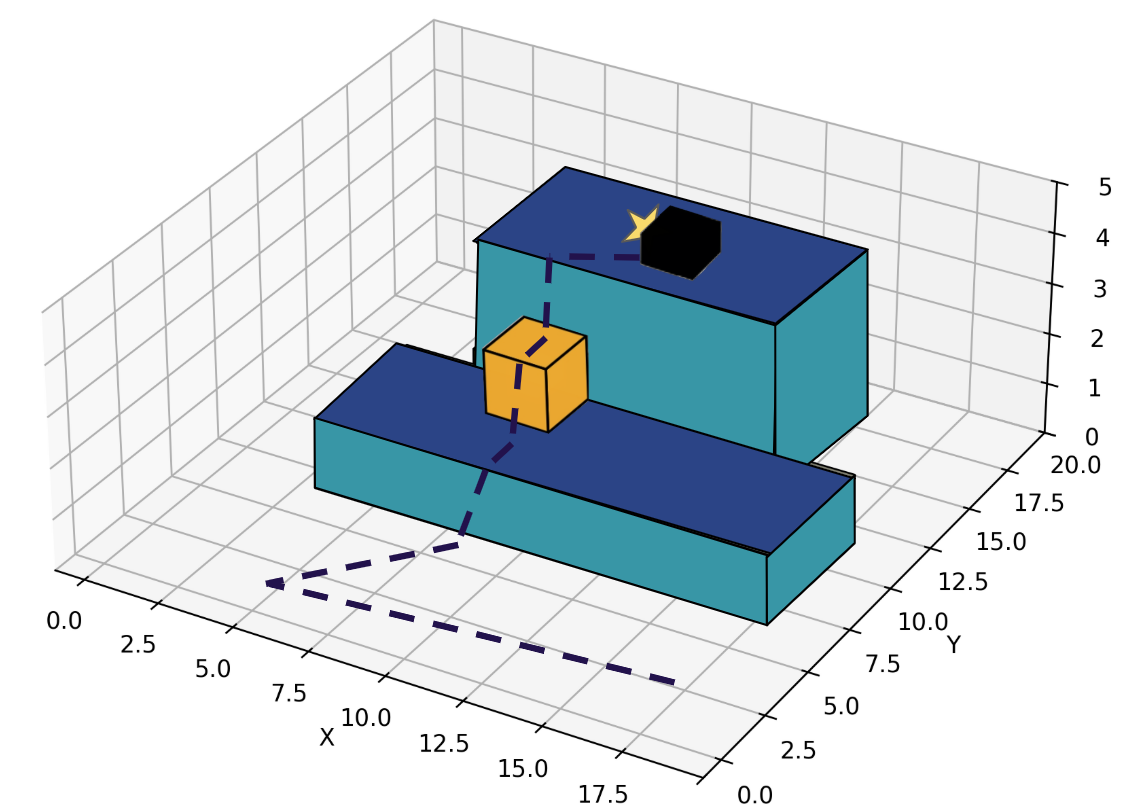}} 
    \caption{Example environment with three fixed (non-movable) planes—Ground ($z{=}0$), Plane 1 ($z{=}1$), and Plane 2 ($z{=}3$)—and a single movable block $b_1$ with height $h_1 = 1$ unit. The robot (black cube) is initially located on the ground plane and is tasked with reaching a goal state (yellow star) on Plane 2. As per \texttt{NavigReach}, the robot can traverse vertical height differences of up to $1$ unit. \textcolor{black}{Observe that the robot cannot reach the goal region from its initial location shown in Fig.~\ref{fig:case1a}. While Plane 1 is reachable from the ground, Plane 2 is not reachable from Plane 1 because their height difference is $2$ units, and it is also not directly reachable from the ground since their height difference is $3$ units.}
    Thus, the robot first approaches and grasps the block $b_1$, as illustrated in Fig.~\ref{fig:case1b}. The translucent green box visualizes the grasp-feasible region generated by \texttt{ManipReach}. The $\delta$-band (pink region) highlights a region on Plane 1 such that, if $b_1$ is placed there, Plane 2 may become reachable from Plane 1 (see Sec.~\ref{sec:block_placement_density}). After grasping $b_1$, the robot carries it onto Plane 1 and places it within the pink region (Fig.~\ref{fig:case1c}); the green box now visualizes the feasible drop-off region determined by \texttt{ManipReach}. Using the placed block as a step, the robot climbs onto the block and then onto Plane 2, reaching the goal (Fig.~\ref{fig:case1d}). Dashed black lines illustrate the robot’s path.}

    \label{fig:env}
\end{figure*}

\textbf{Related Works:} Next, we review related works in four areas: (i) NAMO; (ii) rearrangement planning; (iii) multi-robot mutualism; and (iv) non-uniform sampling strategies.

\textit{(i) NAMO:} Several search- and sampling-based planners have been developed to address NAMO problems~\cite{stilman2005navigation, stilman2007manipulation, stilman2008planning, moghaddam2016planning, nieuwenhuisen2008effective, van2010path, bayraktar2023solving, zhang2025namo}. Extensions to unknown environments have been investigated in~\cite{wu2010navigation, levihn2013hierarchical, levihn2013planning, renault2019towards, ellis2023navigation, armleder2024tactile}. These works focus on 2D planar navigation and, therefore, movable objects are modeled primarily as obstacles that should be pushed out of the way to reopen a blocked path on a single support surface (e.g., the floor). By contrast, our setting treats objects as resources to create traversable connections among multiple support surfaces; we leverage blocks to form steps/bridges that alter inter-plane connectivity in 3D environments.
\textcolor{black}{The NAMO problem most closely related to ours is addressed in \cite{Levihn2014Autonomous} and its extension \cite{Levihn2015Using}. These papers address navigation obstructions via reactive tool use in surface-constrained environments: when encountering an unavoidable obstacle (e.g., a hole in the floor), the robot identifies an appropriate object through inverse affordances and deploys it to restore connectivity. A key assumption in these works is that obstructions can be resolved \textit{independently}, each with a \textit{single always-available object}. Consequently, the repair mechanism can be invoked repeatedly in a purely local manner, without coordinating actions across multiple obstructions. This assumption does not hold in our setting, where obstructions (i.e., physical gaps between elevated planes) can be \textit{dependent} and must be handled \textit{coordinately} due to the limited number of available blocks. In other words, in \cite{Levihn2015Using}, each obstruction can be resolved with a single pick-and-place action, whereas our setting requires long-horizon pick-and-place plans. For example, reaching a goal region may require bridging gap X, but gap X can only be bridged using block A that is located on an inaccessible plane Y. Solving such a case requires the robot to reason about how to use its reachable blocks (possibly multiple times) to first access plane Y, retrieve block A, and then deploy it to bridge gap X. Such planning capabilities are not developed in \cite{Levihn2015Using}; instead, in the previous example, that work would assume that block A bridging the gap X is accessible to the robot. Additionally, unlike \cite{Levihn2015Using}, our planner is supported by probabilistic completeness guarantees.} 

\textit{(ii) Rearrangement Planning (RP):} Related work also exists in RP, where the robot interacts with movable objects to achieve specific object‐goal configurations~\cite{labbe2020monte,krontiris2015dealing,ahn2024relopush,ren2022rearrangement}. Unlike NAMO, RP problems typically do not specify a goal for the robot itself, but instead focus on realizing a predefined final arrangement of the objects. In contrast, in our setting, the robot must reach a designated goal region, while the final configuration of the obstacles is not pre-specified. The robot must reason about how to rearrange obstacles to enable its own traversal to the goal, rather than simply arranging objects according to a fixed target configuration. 

\textcolor{black}{\textit{(iii) Mutualism in Multi-Robot Systems:} Another line of related work explores how to leverage heterogeneity in robot teams to complete tasks that individual robots could not accomplish alone \cite{nguyen2023mutualistic,nguyen2025mutualisms,nguyen2024resiliency,nguyen2024scalable,egerstedt2021robot}. For example, consider a ground robot that must reach a goal area separated from its initial position by a water body. Instead of declaring the task infeasible, these works develop coordination frameworks in which robots help each other—e.g., an aquatic robot carries the ground robot—to complete the task. However, in these approaches, robots remain passive with respect to the environment’s structure. In the previous example, if no aquatic robot is available, the task is considered infeasible. In contrast, our framework allows robots to actively interact with and rearrange the environment itself to connect previously disconnected areas; e.g., by building a `bridge' to cross the water body.}

\textit{(iv) Non-uniform Sampling Strategies:}
Several works design \emph{non-uniform sampling strategies} to accelerate sampling-based planners~\cite{luo2021abstraction,wang2020neural,ichter2018learning,liu2024nngtl,johnson2023learning,kantaros2020stylus,qureshi2020motion,feng2025rrt,natraj2025conformalized}. However, these approaches target different planning problems, focusing on finding obstacle-free paths between an initial and goal robot configuration assuming such paths exist and thus they cannot be directly applied to our setting. Non-uniform sampling for NAMO problems is considered in~\cite{zhang2025namo}, which, however, focuses on 2D environments and therefore does not trivially extend to our problem.

\textbf{Contributions:}
\textit{First}, we introduce a new NAMO problem in disconnected 3D environments in which movable objects are treated as \textit{assets} for bridging disconnected planes, rather than merely as obstacles to be removed.
\textit{Second}, we propose a novel sampling‐based algorithm that jointly explores the configuration space of the robot and objects to determine which objects to move, where to place them, and in what order, thereby enabling goal reachability.
\textit{Third}, we design non‐uniform sampling strategies that prioritize tree growth toward promising directions.
\textit{Fourth}, we prove that our algorithm is probabilistically complete.
\textit{Finally}, we provide extensive numerical and hardware results that validate the effectiveness and efficiency of our approach.

\section{Problem Definition} \label{sec:PF}

\subsection{Environment}\label{sec:PFEnvironment} 
We consider an environment in $\mathbb{R}^3$ containing a set $\ccalB=\{b_1,\dots,b_{N_b}\}$ of $N_b \geq 1$ known objects, called \textit{blocks}. For simplicity, we model these blocks as right polygonal prisms (see Remark~\ref{rem:shapes}). The geometric configuration of each block $b_i$ is specified by the coordinates $\bbq_i^j$ of the vertices $j\in\{1,\dots,s\}$ of its bottom surface, together with its height $h_i>0$. We compactly denote the geometric configuration of $b_i$ as a tuple:
\begin{equation}\label{eq:block_config}
\bbb_i = (\bbQ_i, h_i),
\end{equation}
where $\bbQ_i = [\bbq_i^1, \bbq_i^2, \dots, \bbq_i^s] \in \mathbb{R}^{s \times 3}$ is the matrix collecting the coordinates of the $s$ vertices of the bottom surface. 


The blocks are divided into (i) \textit{non-movable blocks}, collected in $\ccalB^{\text{nm}}\subseteq\ccalB$, representing fixed structures in the environment (e.g., floors, tables, or beds) whose configuration $\bbb_i$ does not change over time; and (ii) \textit{movable blocks}, $\ccalB^{\text{m}}\subseteq\ccalB$, i.e., objects whose position and orientation (captured by the vertices of their bottom surface in $\bbb_i$) can be modified by the robot. The top surface of each block is assumed to be navigable by the robot. We call these surfaces \textit{planes}. Formally, the plane associated with block $b_i$ is defined as
\begin{equation}\label{eq:plane}
\ccalP_i = \{\bbq \in \mathbb{R}^3 \mid g_j(\bbq)\leq 0, \forall j\in\{1,\dots,s\}\},
\end{equation}
where $g_j(\bbq)$ are linear inequalities for the inward half-spaces bounded by the edges of block’s top-surface polygon.\footnote{Each $g_j(\bbq)$ takes the form $g_j(\bbq) = \mathbf{a}_j \cdot (\bbq-\bbq_0) + \mathbf{b}_j$, where $\mathbf{a}_j$ is the in-plane normal to edge $j$, $\bbq$ is the test point, and $\bbq_0$ is a reference point on the plane.}
\textcolor{black}{Throughout the paper, for simplicity of presentation, we also assume that the top surface of a movable block $b_i$ (and consequently its height $h_i$) remains fixed. This implies that while the location and orientation (i.e., $\bbQ_i$) of a movable block $b_i$ may change, the block cannot be flipped.\footnote{\textcolor{black}{This assumption can be relaxed by allowing the proposed sampling-based planner to search not only over the continuous space of block poses (captured by $\bbQ_i$) but also over a discrete space of block heights.}}}

Planes associated with non-movable blocks are collected in the set $\ccalP^{\text{nm}} = \{\ccalP_i \mid b_i\in\ccalB^{\text{nm}}\}$. In the set $\ccalP^{\text{nm}}$ we also include the ground plane, \textcolor{black}{denoted by $\ccalP_G$,} that is not induced by any block $b_i\in\ccalB$.
%
Similarly, we define $\ccalP^{\text{m}} = \{\ccalP_i \mid b_i\in\ccalB^{\text{m}}\}$ as the set of planes associated with movable blocks. 
The structure of the environment at time $t$, determined by all blocks $b_i$, is denoted by $\Omega(t)$. An example of such an environment is shown in Fig.~\ref{fig:env}.




\subsection{Robot Dynamics}\label{sec:robDynamics}
We consider a holonomic robot whose configuration is fully determined by its position $\bbx \in \ccalX \subseteq \mathbb{R}^3$, where $\ccalX$ denotes the robot’s configuration (position) space and consists of the traversable region on all planes ($\ccalP^{\text{nm}}$ and $\ccalP^{\text{m}}$). Its low-level motion dynamics are abstracted by a function that checks whether a dynamically feasible, collision-free path exists between two positions, defined as follows.

\begin{definition}[Function $\texttt{NavigReach}$]\label{def:check}
The function $\texttt{NavigReach}:\ccalX \times \ccalX \times \Omega\rightarrow \{\texttt{True}, \texttt{False}\}$ takes as input two robot positions $\bbx_i,\bbx_j \in \ccalX$ and the current environment state $\Omega(t)$. It returns \texttt{True} if there exists a dynamically feasible, collision-free path from $\bbx_i$ to $\bbx_j$ in $\Omega(t)$; otherwise, it returns \texttt{False}.
\end{definition}

The robot is also equipped with a manipulator that allows it to pick and place blocks. 
We abstract the low-level arm dynamics into the following high-level function:
\begin{definition}[Function $\texttt{ManipReach}$]\label{def:manipreach}
The function $\texttt{ManipReach} : \ccalB^{\text{m}} \times \Omega \rightarrow \ccalX$ takes as input a movable block $b_i \in \ccalB^{\text{m}}$ and an environment state $\Omega(t)$, and returns the set $\ccalX_{\text{manip}} \subseteq \ccalX$ of robot configurations from which the robot can feasibly interact with $b_i$, subject to its kinematic and dynamic constraints in $\Omega(t)$. 
\textcolor{black}{Therefore, given the configuration $\bbb_i$ of block $b_i$ in $\Omega(t)$, the function returns all robot configurations from which the robot can either grasp $b_i$ or place $b_i$ in that same configuration $\bbb_i$.}
\end{definition}
Implementation details for $\texttt{NavigReach}$ and $\texttt{ManipReach}$ are provided in Sec. \ref{sec:PRM}.

\subsection{Task Specification}\label{sec:Task}
The robot is initially located on one of the planes $\ccalP_i$ and is tasked with reaching a \textcolor{black}{designated goal state $\bbx_{\text{goal}}\in \ccalX$} that lies on a plane $\ccalP_j$ ($\ccalP_i \neq \ccalP_j$). We focus on cases where $\bbx_{\text{goal}}$ is not reachable by the initial robot position $\bbx_{\text{start}}$, due to the robot dynamics not allowing it to directly move or jump to desired planes; i.e., $\texttt{NavigReach}(\bbx_{\text{start}}, \bbx_{\text{goal}}, \Omega(0)) = \texttt{False}$, 
Instead of reporting task failure, the robot must determine how to reconfigure the environment.

To formalize this, we define a manipulation action
$\texttt{Move}(\bbb_i, \bbb_i'),$
which relocates a block $b_i$ from its current configuration $\bbb_i$ to a new configuration $\bbb_i'$. Executing such an action requires the robot to navigate to a configuration in $\ccalX_{\text{manip}}\textcolor{black}{=\texttt{ManipReach}(b_i,\Omega(t))}$ from which $b_i$ can be grasped, transport it, and place it at $\bbb_i'$. To preserve the movability properties of all blocks, we require movable blocks to be placed only on non-movable planes \textcolor{black}{and that the movable block can fully lie on top of the non-movable one}.
The objective is to compute a manipulation plan $\tau=\langle \tau(1),\dots,\tau(k),\dots,\tau(H)\rangle$ where each $\tau(k)=\texttt{Move}(\bbb_i,\bbb_i')$ denotes the $k$-th manipulation action, $k\in\{1,\dots,H\}$ for some horizon  $H\geq1$. After executing $\tau$, the robot reaches a state $\bbx'$ in $\Omega(T)$ from which $\bbx_{\text{goal}}$ is reachable, i.e., $\texttt{NavigReach}(\bbx', \bbx_{\text{goal}}, \Omega(T))=\texttt{True}$ for some $T>0$. \textcolor{black}{Note that $k\in\{1,\dots,H\}$ indexes actions in the plan $\tau$, whereas $t\in[0,T]$ denotes continuous time. For example, when the $H$-th action is executed, $T$ time units have elapsed.}

\begin{problem}\label{prob_statement} 
Given $\texttt{NavigReach}$, $\texttt{ManipReach}$, an initial position $\bbx_{\text{start}}$, an initial environment state $\Omega(0)$, and \textcolor{black}{a goal state $\bbx_{\text{goal}} \in \ccalX$, compute a manipulation plan
$\tau$ such that after execution, the robot can reach  $\bbx_{\text{goal}}$.}
\end{problem}

\begin{rem}[Complex Block Structures]\label{rem:shapes}
\textcolor{black}{Our approach can support blocks with arbitrary geometry (e.g., non-flat top surfaces or non-vertical sides) as long as \texttt{NavigReach} and \texttt{ManipReach} can handle such cases. In particular, \texttt{NavigReach} must be able to reason about the traversability of complex surfaces and terrains, while \texttt{ManipReach} must be able to assess the static stability of objects placed on uneven terrains. A similar requirement applies to the function \texttt{Candidate}, introduced in Section~\ref{sec:settingUP}. Extending our framework to more unstructured environments by designing these high-level abstractions is outside the scope of this work and will be pursued in future research.}
\end{rem}

\section{\textcolor{black}{Interactive Planner \\For Environment Reconfiguration}}\label{sec:ResilientPlanning}

In this section, we present \textcolor{black}{\textbf{BRiDGE}}, our sampling-based planning framework for solving Problem~\ref{prob_statement}. First, in Section~\ref{sec:settingUP} we introduce key definitions and utility functions used by \textcolor{black}{\textbf{BRiDGE}}. Then, in Section~\ref{sec:planner} we describe the core sampling-based planner that generates a manipulation plan $\tau$ solving Problem~\ref{prob_statement}.

\subsection{Setting up the Planner}\label{sec:settingUP}
We first introduce key definitions and utility functions that will be used in \textcolor{black}{\textbf{BRiDGE}}, complementing Definitions \ref{def:check} and \ref{def:manipreach}. Specifically, we define when two planes $\ccalP_i$ and $\ccalP_j$ are pairwise inaccessible, i.e., when the robot cannot move from $\ccalP_i$ to $\ccalP_j$ without violating dynamic constraints or entering obstacle regions.

\begin{definition}[Pairwise Inaccessible Planes]\label{def:disjointplanes}
Let $\ccalP_i$ and $\ccalP_j$ be any two distinct planes. We say that $\ccalP_j$ is inaccessible/unreachable from $\ccalP_i$ if $\texttt{NavigReach}(\bbx_i, \bbx_j, \Omega(t)) = \texttt{False}$ for all pairs of robot positions $\bbx_i, \bbx_j \in \ccalX$ whose corresponding points lie on $\ccalP_i$ and $\ccalP_j$, respectively.
\end{definition}

For simplicity of presentation, we assume that this relation is symmetric, i.e., if $\ccalP_j$ is inaccessible from $\ccalP_i$, then the reverse holds as well. 
%
If $\ccalP_j$ is inaccessible from $\ccalP_i$, then we say that there exists a `gap' between them.
\textcolor{black}{We are concerned with gaps in the structural layout induced by \emph{non-movable} planes only. That is, we consider gaps between $\ccalP_i\in\ccalP^{\text{nm}}$ and $\ccalP_j\in\ccalP^{\text{nm}}$ that will be bridged using movable blocks. Gaps between non-movable blocks are identified using the following function:}

\begin{definition}[Function $\texttt{Gap}$]\label{def:Gap}
The function $\texttt{Gap}:\ccalP^{\text{nm}} \times \ccalP^{\text{nm}} \times \Omega \rightarrow \{\texttt{True}, \texttt{False}\}$ returns \texttt{True} iff there exists a gap between $\ccalP_i\in\ccalP^{\text{nm}}$ and $\ccalP_j\in\ccalP^{\text{nm}}$ (as in Definition~\ref{def:disjointplanes}); otherwise, it returns \texttt{False}.
\end{definition}


Next, we introduce a function that determines which movable blocks can potentially be used, by relocating them, to `bridge' a gap between $\ccalP_i$ and $\ccalP_j$.

\begin{definition}[Function \texttt{Candidate}]\label{def:Candidate}
The function $\texttt{Candidate}:(\ccalP^{\text{nm}}, \ccalP^{\text{nm}}) \rightarrow \{ (b_n, \ccalP_k) \mid b_n \in \ccalB^{\text{m}}, \, \ccalP_k \in \ccalP^{\text{nm}} \}$
takes as input a pair of pairwise inaccessible non-movable planes $(\ccalP_i, \ccalP_j\in \ccalP^{\text{nm}})$ and returns the set of \emph{all} tuples $(b_n, \ccalP_k)$ such that there exists at least one placement configuration of $b_n$ on $\ccalP_k$ that, if realized, \textcolor{black}{may potentially} enable the robot (as determined by $\texttt{NavigReach}$) to reach a configuration on $\ccalP_j$ starting from a configuration on $\ccalP_i$, using the top surface of $b_n$ as an intermediate plane.
\end{definition}

\textcolor{black}{Note that \texttt{Candidate} does not return the specific placement configuration of the movable block $b_n$ on the non-movable plane $\ccalP_k$. The function also does not need to be perfect: it may return tuples $(b_n, \ccalP_k)$ for which no placement of $b_n$ on $\ccalP_k$ actually bridges the desired gap. Such incorrect candidates are later filtered out by feasibility checks (see Alg.~\ref{alg:sample_move}). We make the following assumption for this function:}

\begin{assumption}[Candidate Over-Approximation]\label{ass:cand}
\textcolor{black}{
The set returned by \texttt{Candidate}$(\ccalP_i,\ccalP_j)$ must include every tuple $(b_n,\ccalP_k)$ that can enable a gap-bridging placement between $(\ccalP_i,\ccalP_j)$. In other words, \texttt{Candidate} may include false positives, but it must not exclude any true gap-bridging option.}
\end{assumption}

\textcolor{black}{A conservative design for \texttt{Candidate} to satisfy Assumption \ref{ass:cand} is to return 
all possible tuples $(b_n,\ccalP_k)$. In Sec.~\ref{sec:PRM}, we describe a less 
conservative implementation of the \texttt{Gap} and \texttt{Candidate} functions,
which leverages the right–polygonal-prism structure of all blocks.}

\subsection{Sampling-Based Planner}\label{sec:planner}
%
To compute plans $\tau$ addressing Problem~\ref{prob_statement}, we propose a sampling-based approach that incrementally builds a tree exploring the space of robot and block configurations. The proposed algorithm is summarized in Alg.~\ref{alg:sampling_planner}.

 In what follows we provide intuition for the steps of the \textcolor{black}{\textbf{BRiDGE} planner} Alg.~\ref{alg:sampling_planner}. The tree is defined as $\ccalT=(\ccalV_T,\ccalE_T)$, where $\ccalV_T$ and $\ccalE_T\subseteq\ccalV_T\times\ccalV_T$ are the sets of nodes and directed edges, respectively. A node $v\in\ccalV_T$ is written as $v = \big[\Omega^v,\; \bbx^v\big]$, where $\Omega^v$ denotes the environment structure (i.e., the configurations of all non-movable and movable blocks) and $\bbx^v\in\ccalX$ is the robot position associated with that node. A directed edge $(v,v')\in\ccalE_T$ exists if applying a single manipulation $\texttt{Move}(\bbb_i,\bbb_i')$ to some block $b_i$ transforms $\Omega^v$ into $\Omega^{v'}$. With a slight abuse of notation we write this reconfiguration as $\Omega^{v'} = \Omega^v \oplus \texttt{Move}(\bbb_i,\bbb_i')$
where $\oplus$ denotes the environment update that replaces $b_i$'s configuration $\bbb_i$ by $\bbb_i'$. The tree is rooted at $v_{\text{root}}=[\Omega(0),\bbx_{\text{start}}]$, with initial sets $\ccalV_T=\{v_{\text{root}}\}$ and $\ccalE_T=\emptyset$ (line~\ref{alg:init_tree}, Alg.~\ref{alg:sampling_planner}). The tree grows iteratively by alternating sampling and extension steps. At each iteration $\kappa$ we (i) sample an existing tree node $v_{\text{rand}}\in\ccalV_T$, (ii) sample a manipulation action $\texttt{Move}(\bbb_i,\bbb_i')$ for some movable block $b_i$, and (iii) attempt to apply that action to $v_{\text{rand}}$ to obtain a candidate node $v_{\text{new}}$. If the action is feasible (to be discussed later), $v_{\text{new}}=(\Omega^{v_{\text{new}}},\bbx^{v_{\text{new}}})$ is added to $\ccalV_T$ and the directed edge $(v_{\text{rand}},v_{\text{new}})$ is added to $\ccalE_T$.  
After each successful expansion that produces $v_{\text{new}}$, we test whether the goal is accessible, i.e., if $\texttt{NavigReach}(\bbx^{v_{\text{new}}},\bbx_{\text{goal}},\Omega^{v_{\text{new}}})=\texttt{True}$. Notice that we do not require the goal to be reachable from the root $\bbx_{\text{start}}$ in the final environment but from the current robot position included in $v_{\text{new}}$.

Next, we provide a more detailed description of the sampling and extension steps used by Alg.~\ref{alg:sampling_planner}. Figure \ref{fig:tree-growth} provides the visualization of the tree being built.


\textbf{Sampling a Tree Node:} We sample a tree node $v_{\text{rand}}=[\Omega^{v_{\text{rand}}}, \bbx^{v_{\text{rand}}}]\in\ccalV_T$ according to a discrete mass function $f_\ccalV:\ccalV \rightarrow (0,1)$ (line \ref{alg:sample_node}, Alg.~\ref{alg:sampling_planner}). The mass function can be chosen arbitrarily, provided it assigns nonzero probability to every node (e.g., uniform sampling). 

\textbf{Sampling a Manipulation Action:}
Next, we sample an action $\texttt{Move}(\bbb_i,\bbb_i')$ to apply to the node $v_{\text{rand}}$, generating a new node $v_{\text{new}}$ and the edge $(v_{\text{rand}},v_{\text{new}})$ (line \ref{alg:sample_move_call}, Alg.~\ref{alg:sampling_planner}). Sampling this action occurs hierarchically as follows; see Alg. \ref{alg:sample_move}.


\emph{(i) Sample a high-level triplet:} First, we sample a triplet
\begin{equation}\label{eq:triplet}
s = (b_i,\ccalP_k,g),
\end{equation}
where $b_i \in \ccalB^{\text{m}}$ is a movable block to be relocated, $\ccalP_k \in \ccalP^{\text{nm}}$ is the \emph{placement plane} (i.e., a non-movable plane where $b_i$ will be placed), and $g \in \big(\ccalP^{\text{nm}} \times \ccalP^{\text{nm}}\big)\ \cup\ {\varnothing}$ encodes the intent of this relocation. Specifically, $g=(\ccalP_a,\ccalP_b)$ indicates that $b_i$ will be placed on $\ccalP_k$ to bridge the gap between $\ccalP_a$ and $\ccalP_b$, while $g=\varnothing$ denotes a relocation with no gap-bridging role.

To generate such a triplet $s$, we first construct the set  $\ccalS$, which collects all possible triplets of the form \eqref{eq:triplet}. This set includes:
(i) $s=(b_i,\ccalP_k,(\ccalP_a,\ccalP_b))$ for every $b_i \in \ccalB^{\text{m}}$, $\ccalP_k \in \ccalP^{\text{nm}}$, and non-movable planes $\ccalP_a,\ccalP_b$ such that $\texttt{Gap}(\ccalP_a,\ccalP_b,\Omega(0)) = \texttt{True}$ and $(b_i,\ccalP_k) \in \texttt{Candidate}(\ccalP_a,\ccalP_b)$; and
(ii) $s=(b_i,\ccalP_k,\varnothing)$ for every $b_i \in \ccalB^{\text{m}}$ and all $\ccalP_k \in \ccalP^{\text{nm}}$. The set $\ccalS$ is constructed at the initialization of the tree (line \ref{alg:build_triplets}, Alg.~\ref{alg:sampling_planner}).

We then sample one triplet $s \in \ccalS$ according to a mass function $f_{\ccalS} : \ccalS \to (0,1) $. The function $f_{\ccalS}$ may be chosen arbitrarily, provided it assigns nonzero probability to every $s \in \ccalS$ (e.g., uniform) (line \ref{alg:sample_triplet}, Alg.~\ref{alg:sample_move}).

\begin{figure}[t]
  \centering
  \includegraphics[width=\linewidth]{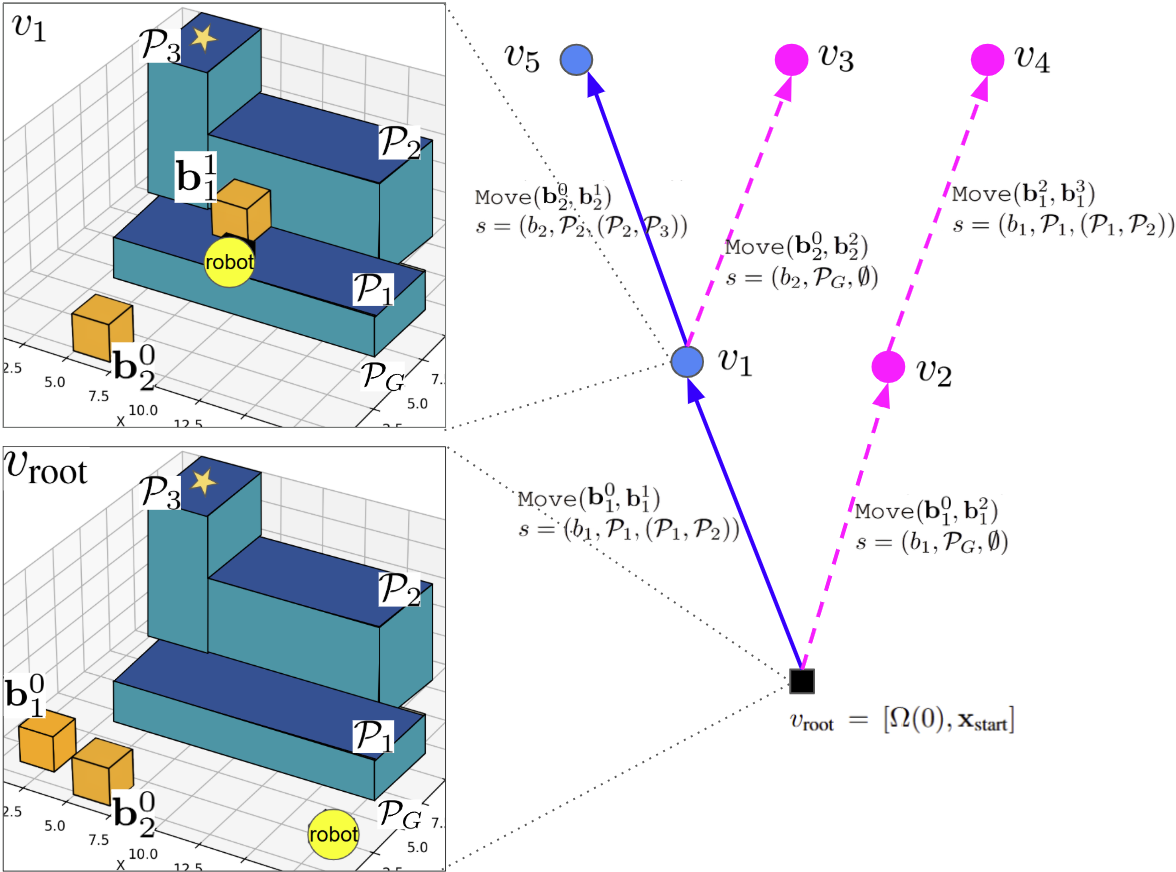}
  \caption{ Graphical illustration of the incremental tree construction; tree nodes are denoted by $v_i$. Each node represents the pair $[\Omega, \mathbf{x}]$, i.e., the joint state of the robot and the blocks, and each edge corresponds to a manipulation action $\texttt{Move}(b_i, b_i')$. A child node therefore differs from its parent in the position of a single block and the robot configuration. The right side of the figure shows the tree built by the proposed planner. \textcolor{black}{Solid blue edges follow the plan $\Pi$ (See Sec. \ref{sec:nonunif1}); the rest of the edges are dashed pink.} Until $v_5$ is discovered, $\mathcal{V}^{\max}_{\Pi}$ contains only $v_1$ since it is the only node realized using $\Pi$. \textcolor{black}{As soon as $v_5$ is added to the tree, $\mathcal{V}^{\max}_{\Pi}$ contains only the node $v_5$.} The left side shows the environment state and robot state at $v_{\text{root}}$ and $v_1$. The environment consists of four non-movable planes: the ground plane $\mathcal{P}_G$ and planes $\mathcal{P}_1$, $\mathcal{P}_2$, and $\mathcal{P}_3$, induced by three non-movable (blue) blocks. The goal (yellow star) lies on $\mathcal{P}_3$. Two movable (orange) blocks, denoted $\mathbf{b}_i^d$, are available, where the superscript $d$ indicates the block’s $d$-th placement.  
  }
  \label{fig:tree-growth}
\end{figure}

(ii) \textit{Sampling the Next Block Configuration:} \phantomsection\label{sec:ii}
The sampled triplet $s$ determines the block $b_i$ used to expand the tree node $v_{\text{rand}}$. Given the intent $g$ and placement plane $\ccalP_k$, we sample a candidate configuration $\bbb_i'$ of $b_i$, defining the action $\texttt{Move}(\bbb_i, \bbb_i')$. Specifically, given $v_{\text{rand}}$ and triplet $s$, we perform up to $N\!\ge\!1$ independent trials to generate $\bbb_i'$. A trial is successful only if the resulting action $\texttt{Move}(\bbb_i, \bbb_i')$ is executable by the robot (as per Defs.~\ref{def:check} and \ref{def:manipreach}) and fulfills the intent $g$. The procedure for sampling $\bbb_i'$ and verifying feasibility consists of the following three steps.

\textit{(ii.1) Sampling robot position and checking feasibility.} \phantomsection\label{sec:ii1}
We first compute the grasp set $\ccalX_{\text{grasp}} = \texttt{ManipReach}(b_i,\Omega^{v'})$, i.e., the set of robot configurations from which the manipulator can grasp $b_i$ in configuration $\bbb_i$. A grasp configuration $\bbx_{\text{grasp}}$ is then sampled according to a distribution $f_{\ccalX_{\text{grasp}}} : \ccalX_{\text{grasp}} \to \mathbb{R}{\ge 0}$ (line \ref{alg:sample_grasp}, Alg.~\ref{alg:sample_move}).
Then, we check whether $\texttt{NavigReach}(\bbx^{v_{\text{rand}}}, \bbx_{\text{grasp}}, \Omega^{v_{\text{rand}}}) = \texttt{True}$, i.e., whether a feasible path exists from the current robot position $\bbx^{v_{\text{rand}}}$ to the grasp pose $\bbx_{\text{grasp}}$ (line \ref{alg:check_grasp}, Alg.~\ref{alg:sample_move}). If this condition fails, the trial is discarded; otherwise, we proceed to the next step.

\textit{(ii.2) Sampling next block configuration and checking intent fulfillment.}
Second, we sample the next configuration $\bbb_i'$ of the block on the placement plane $\ccalP_k$. Let $\ccalC^{v_{\text{rand}}}_{\text{block}}(b_i,\ccalP_k,\Omega^{v_{\text{rand}}})$ denote the configuration space of physically valid poses of $b_i$ where its bottom face lies entirely on $\ccalP_k$ in environment $\Omega^{v_{\text{rand}}}$ (evaluated with $b_i$ treated as carried). We sample $\bbb_i'$ from a distribution $f_{\ccalC^{\,v}_{\text{block}}} : \ccalC^{\,v_{\text{rand}}}_{\text{block}} \rightarrow \mathbb{R}_{\geq 0}$ (line \ref{alg:sample_place}, Alg.~\ref{alg:sample_move}) 
and form the new environment via the defined update:
$\Omega_{\text{temp}}=\Omega^{v_{\text{rand}}}\oplus \texttt{Move}(\bbb_i,\bbb_i')$ (line \ref{alg:update_temp}, Alg.~\ref{alg:sample_move}). Recall that the only difference between $\Omega_{\text{temp}}$ and $\Omega^{v_{\text{rand}}}$ lies in the configuration of $b_i$. Given $\Omega_{\text{temp}}$, we verify whether the  action $\texttt{Move}(\bbb_i,\bbb_i')$ fulfills the intent $g$. Specifically if $g = (\ccalP_a, \ccalP_b)$, we check whether $\texttt{Gap}(\ccalP_a, \ccalP_b, \Omega_{\text{temp}}) = \texttt{False}$, i.e., whether the gap between $\ccalP_a$ and $\ccalP_b$ is now bridged (line \ref{alg:intention_check}, Alg.~\ref{alg:sample_move}). If $g = \varnothing$, the intent is trivially satisfied. If the intent $g$ is not fulfilled, the trial is discarded; otherwise, we proceed to the third step.

\textit{(ii.3) Sampling drop-off position sampling and checking carried-trajectory feasibility.}\phantomsection\label{sec:ii3}
Third, we compute the set $\ccalX_{\text{drop}}$ of all possible robot configurations from which the manipulator can place $b_i$ at $\bbb_i'$, i.e., $\ccalX_{\text{drop}}=\texttt{ManipReach}(b_i,\Omega_{\text{temp}})$. Then, we sample $\bbx_{\text{drop}}$ from a distribution $f_{\ccalX_{\text{drop}}} : \ccalX_{\text{drop}} \rightarrow \mathbb{R}_{\geq 0}$ (line \ref{alg:sample_drop}, Alg.~\ref{alg:sample_move}). We then check if $\texttt{NavigReach}(\bbx_{\text{grasp}},\bbx_{\text{drop}},\Omega_{\text{temp}})=\texttt{True}$, i.e., if there exists a a feasible path from the block grasp location $\bbx_{\text{grasp}}$ to the drop-off position $\bbx_{\text{drop}}$ (line \ref{alg:check_drop}, Alg.~\ref{alg:sample_move}). If this condition met, the current trial is successful and the extension step follows; otherwise, the above process repeats.

\textbf{Tree Extension:} 
As soon as a trial is successful,
we construct the new node $v_{\text{new}}=[\Omega^{v_{\text{new}}},\bbx^{v_{\text{new}}}]$
where $\Omega^{v_{\text{new}}}=\Omega_{\text{temp}}$ and $\bbx^{v_{\text{new}}}=\bbx_{\text{drop}}$ (line \ref{alg:new_node}, Alg.~\ref{alg:sampling_planner}). Then, we update the set of nodes and edges as $\ccalV_T\rightarrow\ccalV_T\cup\{v_{\text{new}}\}$ and $\ccalE_T\rightarrow\ccalE_T\cup\{(v_{\text{rand}},v_{\text{new}})\}$ (line \ref{alg:add_edge}, Alg.~\ref{alg:sampling_planner}). Once an edge $(v_{\text{rand}}, v_{\text{new}})$ is added to the tree, we store the corresponding manipulation action $\texttt{Move}(\bbb_i, \bbb_i')$ along with the robot’s grasp and drop-off positions, $\bbx_{\text{grasp}}$ and $\bbx_{\text{drop}}$, 
that enable its execution. We formalize this by iteratively constructing a function $\ccalA$ that maps each edge in $\ccalE_T$ to its corresponding action and associated robot positions (line \ref{alg:record_action}, Alg.~\ref{alg:sampling_planner}). Next, we check if $\texttt{NavigReach}(\bbx^{v_{\text{new}}},\bbx_{\rm goal},\Omega^{v_{\text{new}}})= \texttt{True}$,
i.e., whether \textcolor{black}{the goal state} is accessible from $\bbx^{v_{\text{new}}}$ in $\Omega^{v_{\text{new}}}$ (line \ref{alg:goal_check}, Alg.~\ref{alg:sampling_planner}). If so, Alg.~\ref{alg:sampling_planner} terminates and returns a plan (which is extracted as described in the next paragraph) (line \ref{alg:backtrack}, Alg.~\ref{alg:sampling_planner}); otherwise, the iteration index is incremented, $\kappa \leftarrow \kappa + 1$, and the process repeats. If all $N$ trials for sampling a valid action $\texttt{Move}(\bbb_i, \bbb_i')$ fail, the iteration index is still incremented, $\kappa \leftarrow \kappa + 1$, without updating the tree.

\textbf{Plan Extraction:}
As soon as the goal test succeeds at $v_{\text{new}}$, we compute the plan $\tau$ solving Problem~\ref{prob_statement} by backtracking from $v_{\text{new}}$ to $v_{\text{root}}$ along $\ccalE_T$ and recovering the corresponding manipulation actions via the function $\ccalA$. Note that $\tau$ specifies the sequence of actions $\tau(k)$ but does not include the robot trajectories to execute them. However, feasible trajectories connecting the grasp and drop-off positions (which can be recovered via $\ccalA$) to execute each action $\tau(k)$ are guaranteed to exist by the feasibility checks performed in steps (ii.1)–(ii.3).


\begin{algorithm}[t]
\caption{\textcolor{black}{\textbf{BRiDGE}:} Sampling‐Based Planner}
\label{alg:sampling_planner}
\SetAlgoLined
\SetKwInOut{Input}{Input}\SetKwInOut{Output}{Output}
\Input{Initial state $(\bbx_{\rm start},\Omega(0))$, goal state $\bbx_{\rm goal}$}
\Output{Sequence of moves $\tau$ reaching $\bbx_{\rm goal}$}
\BlankLine
$\ccalV_T \leftarrow \{\,v_{\text{root}}=[\Omega(0),\bbx_{\rm start}]\,\};\ccalE_T \leftarrow \emptyset; \ccalA \leftarrow \emptyset$\; \nllabel{alg:init_tree}
Build $\ccalS$\; \nllabel{alg:build_triplets}
$\kappa \leftarrow 0$\; \nllabel{alg:iter_init}
\SetKw{KwTo}{do}
\While{$\kappa < K_{\max}$}{{\nllabel{alg:loop}}
$\kappa \leftarrow \kappa + 1$\; \nllabel{alg:iter_incr}
Sample $v_{\text{rand}}=[\Omega^{v_{\text{rand}}},\bbx^{v_{\text{rand}}}]\in\ccalV_T$ via $f_{\ccalV}$\; \nllabel{alg:sample_node}
$[\Omega^{v_{\text{new}}},\bbx_{\rm grasp},\bbx^{v_{\text{new}}},(\bbb_i,\bbb_i'),\texttt{Success}]$ = \texttt{SampleMove}($v_{\text{rand}}$, $\ccalS$)\; \nllabel{alg:sample_move_call}
  \If{\texttt{Success}=\texttt{True}}{
    $v_{\rm new}\leftarrow [\Omega^{v_{\text{new}}},\bbx^{v_{\text{new}}}]$;\ add $v_{\rm new}$ to $\ccalV_T$ and edge $(v_{\text{rand}},v_{\rm new})$ to $\ccalE_T$\; \nllabel{alg:new_node}\nllabel{alg:add_edge}
    $\ccalA[(v_{\text{rand}},v_{\rm new})] \leftarrow (\texttt{Move}(\bbb_i, \bbb_i'),\bbx_{\rm grasp},\bbx^{v_{\text{new}}})$\; \nllabel{alg:record_action}
    \If{\textcolor{black}{$\texttt{NavigReach}(\bbx^{v_{\text{new}}},\bbx_{\rm goal},\Omega^{v_{\text{new}}})$}}{ \nllabel{alg:goal_check}
      \Return backtrack plan $\tau$ using $\ccalA$\; \nllabel{alg:backtrack}
    }
  }
}
 \end{algorithm}

\begin{algorithm}[t]
\caption{\texttt{SampleMove}}
\label{alg:sample_move}
\SetAlgoLined
\SetKwInOut{Input}{Input}\SetKwInOut{Output}{Output}
\Input{Tree node $v_{\text{rand}}$, triplet set $\ccalS$}
\Output{$\Omega_{\text{temp}}$, $\bbx_{\rm grasp}$, $\bbx_{\text{drop}}$,($\bbb_i,\bbb_i'$), Success flag}
\BlankLine
Sample $s=(b_i,\ccalP_k,g)\in\ccalS$ via $f_{\ccalS}$\; \nllabel{alg:sample_triplet}
\For{$\ell=1$ \KwTo $N$}{ \nllabel{alg:inner_trials}
\textcolor{black}{Sample} $\bbx_{\rm grasp}\sim f_{\ccalX_{\rm grasp}}$ on $\ccalX_{\rm grasp}{=}\texttt{ManipReach}(b_i,\Omega^{v_{\text{rand}}})$\; \nllabel{alg:sample_grasp}
\lIf{$\neg\,\texttt{NavigReach}(\bbx^{v_{\text{rand}}},\bbx_{\rm grasp},\Omega^{v_{\text{rand}}})$}{\textbf{continue}} \nllabel{alg:check_grasp}
\textcolor{black}{Sample} $\bbb'_i \sim f_{\ccalC^{\,v}_{\text{block}}}$ on valid poses over $\ccalP_k$\; \nllabel{alg:sample_place}
$\Omega_{\rm temp}\leftarrow \Omega^{v_{\text{rand}}} \oplus \texttt{Move}(\bbb_i,\bbb'_i)$\; \nllabel{alg:update_temp}
\lIf{$(g\neq\emptyset) \textcolor{black}{\wedge} (\texttt{Gap}(\ccalP_a,\ccalP_b,\Omega_{\rm temp}))$}{\textbf{continue}} \nllabel{alg:intention_check}
\textcolor{black}{Sample} $\bbx_{\rm drop}\sim f_{\ccalX_{\rm drop}}$ on $\ccalX_{\rm drop}{=}\texttt{ManipReach}(b_i,\Omega_{\rm temp})$\; \nllabel{alg:sample_drop}
\lIf{$\neg\,\texttt{NavigReach}(\bbx_{\rm grasp},\bbx_{\rm drop},\Omega_{\rm temp})$}{\textbf{continue}} \nllabel{alg:check_drop}
\Return $\Omega_{\text{temp}}, \bbx_{\rm grasp}, \bbx_{\text{drop}},(\bbb_i,\bbb_i'),\texttt{True}$\; \nllabel{alg:mark_success}
}
\Return $\emptyset,\emptyset,\emptyset ,\emptyset,\texttt{False}$\; \nllabel{alg:mark_fail}
\end{algorithm}

\section{\textcolor{black}{Non-Uniform Sampling Strategy}}
\label{sec:biased-sampling}
In this section, we discuss how the mass functions and distributions used to implement the sampling strategy of \textcolor{black}{\textbf{BRiDGE} (Alg. \ref{alg:sampling_planner})} can be constructed. A straightforward choice is to adopt uniform mass functions and distributions for $f_{\ccalV}$, $f_{\ccalS}$, $f_{\ccalX_{\text{grasp}}}$, $f_{\ccalC^{\,v}_{\text{block}}}$, and $f_{\ccalX_{\text{drop}}}$. However, this may slow down the planning process and consequently the construction of feasible plans especially as the number of blocks/planes increases; see Section \ref{sec:Sim}. To accelerate time-to-first-solution, we design non-uniform sampling strategies for (i) node selection \(f_\ccalV\), (ii) triplet selection \(f_{\ccalS}\), and (iii) block-pose sampling \(f_{\ccalC^{\,v}_{\text{block}}}\); the other distributions are selected to be uniform. \textcolor{black}{In Section \ref{sec:nonunif1} we jointly design non-uniform sampling strategies for for \(f_{\ccalV}\) and \(f_{\ccalS}\) while in Section \ref{sec:block_placement_density} we discuss the design of \(f_{\ccalC^{\,v}_{\text{block}}}\). In Section \ref{subsec:fail-refresh}, we discuss how these non-uniform sampling strategies can be refined on-the-fly in case they are biased toward infeasible directions.}

\subsection{Designing \(f_\ccalV\) and \(f_{\ccalS}\)}\label{sec:nonunif1}
\textcolor{black}{Our non-uniform sampling strategy for \(f_\ccalV\) and \(f_{\ccalS}\) is determined by a symbolic plan \(\Pi\)  defined as a sequence of high-level navigation and manipulation actions without validating their feasibility at each step.} 

\subsubsection{BFS Planner (\(\Pi\)):}
\label{subsubsec:bfs-planner}

The plan $\Pi$ will be constructed using a Breadth-First-Search (BFS) planner. This planner can be invoked at any iteration $\kappa$ and tree node \textcolor{black}{denoted by $v_{\Pi}=(\Omega_{\Pi},\bbx_{\pi})\in\ccalV$. Initially, it is invoked at iteration $\kappa=0$ from the root node $v_{\Pi}=v_{\text{root}}$ (i.e., $\Omega_{\Pi}=\Omega_{\text{start}}$ and $\bbx_{\pi}=\bbx_{\text{start}}$) to find a plan $\Pi$ defined as a sequence of triplets, i.e.,  \(\Pi=\langle s_1,\dots,s_M\rangle\) with \(s_m=(b_i,\ccalP_k,g)\),} that if executed the goal region may become accessible to the robot. By executing \(s_m=(b_i,\ccalP_k,g)\) we mean that the robot will place the movable block $b_i$ on the non-movable plane $\ccalP_k$ to fulfill the intent $g$.  
\textcolor{black}{Note that this symbolic plan does not specify the exact placement pose of $b_i$, nor does its existence guarantee that a feasible placement of $b_i$ on $\mathcal{P}_k$ exists to fulfill the intent $g$, or that $\mathcal{P}_k$ is necessarily reachable by the robot.} In fact, all plane/block geometry and robot coordinates are ignored during the construction of $\Pi$; feasibility of geometric execution (reachability, collision-free placements, and path existence) is deferred to Alg.~\ref{alg:sampling_planner}.
The purpose of computing $\Pi$ is to identify promising high-level directions, specifically, an ordering of which blocks should be moved and onto which planes they may be placed, that will be used to construct non-uniform distributions \(f_\ccalV\) and \(f_{\ccalS}\). As a consequence of its symbolic nature, the resulting plan  $\Pi$ may end up being infeasible and non-executable; how the BFS planner replans at later iterations $\kappa>0$ from nodes 
$v_{\Pi}\in\ccalV$ to revise $\Pi$, and therefore refine \(f_\ccalV\) and \(f_{\ccalS}\), will also be discussed later.

In what follows, we discuss how the BFS planner generates the plan $\Pi$ at iteration $\kappa=0$. From the node  $v_{\Pi}$, it searches for a \emph{symbolic} high-level sequence of actions that could lead to a solution to Problem \ref{prob_statement}. Unlike Alg.~\ref{alg:sampling_planner}, which treats each \texttt{Move}$(\bbb_i,\bbb_i')$ as a single high-level (pick–place) action, the BFS planner uses a \emph{finer} action abstraction: at each step it applies exactly one elementary action--\emph{GoTo$\langle\text{Plane}\rangle$}(move to an adjacent plane); \emph{Pick$\langle\text{Block}\rangle$} (pick up a block on the current plane); or \emph{Place$\langle\text{Block}\rangle$} (place the held block on the current plane). This decomposition enables reasoning over long action sequences without validating geometry. 

The BFS planner explores a graph $\ccalG=(\ccalW,\ccalE)$ where $\ccalW$ and $\ccalE$ denote the set of nodes and edges of the graph, respectively. Specifically, $\ccalW$ consists of symbolic states \(w=(\ccalP_{\mathrm{cur}},\,b_h,\,L)\in\ccalW\) where \(\ccalP_{\mathrm{cur}}\in\ccalP^{\mathrm{nm}}\) is the non-movable plane the robot would occupy in that symbolic state; \(b_h\in\ccalB^{\text{m}}\) is a block that is currently carried by the robot or $b_h=\varnothing$ if the robot is not carrying any object; and \(L\) is a function assigning a symbolic status to any block $b$. Formally, \(L:\ccalB^{\text{m}}\to\{\texttt{held}\}\ \cup\ \{(\ccalP_k,g)\mid \ccalP_k\in\ccalP^{\mathrm{nm}},\ g\in\{\emptyset\}\cup\{(\ccalP_a,\ccalP_b)\}\}\). In words, if a block \(b\) rests on \(\ccalP_k\) with intent \(g\), then \(L(b)=(\ccalP_k,g)\); if \(b\) is being carried by the robot (i.e., \(b=b_h\)), then \(L(b)=\texttt{held}\).  
We define the initial node \(w_0=(\ccalP_{\mathrm{start}},\,\emptyset,\,L)\) that is constructed using $v_{\Pi}$; thus, at $\kappa=0$, $\ccalP_{\mathrm{start}}$ is the non-movable plane containing $\bbx_{\Pi}$ in $\Omega_{\Pi}$. To determine the status encoded by $L$ of a block $b$ that is not carried by the robot, it suffices to check whether $b$ is critical for maintaining accessibility between any two non-movable planes. \textcolor{black}{For any pair
$\ccalP_i$ and $\ccalP_j$ that currently satisfies $\texttt{Gap}(\ccalP_i,\ccalP_j,\Omega_{\Pi})=\texttt{False}$, we evaluate whether removing block $b$ from its current plane $\ccalP_k$ results in $\texttt{Gap}(\ccalP_i,\ccalP_j,\Omega_{\Pi-b})=\texttt{True}$, where $\Omega_{\Pi-b}$ is the environment $\Omega_{\Pi}$ with block $b$ completely removed.}  If this occurs, then $b$ is considered critical for ensuring accessibility between $\ccalP_i$ and $\ccalP_j$ and we set $L(b)=(\ccalP_k,(\ccalP_i,\ccalP_j))$;\footnote{In case a block is critical for ensuring accessibility between multiple pairs, then we pick one pair $(\ccalP_i,\ccalP_j)$ randomly.} otherwise, we set $L(b)=(\ccalP_k,\varnothing)$. This process is repeated for all blocks $b$ to construct the mapping function associated with the initial node $w_0$. Note the this function $L$ may change across different states $w\in\ccalW$ as the robot applies the actions \textit{GoTo, Pick, \text{and} Place} introduced earlier as it will be discussed next.

The set of edges $(w,w')\in\ccalE$ is induced by the above-mentioned elementary actions (\textit{GoTo, Pick, Place}). Next, we describe in detail the actions available at a symbolic state \(w=(\ccalP_{\mathrm{cur}},b_h,L)\) and the corresponding next state $w'$. At any state $w$ the following elementary actions are \emph{feasible}:

\begin{itemize}
  \item \emph{GoTo}(\(\ccalP'\)). This action instructs the robot to move to plane $\ccalP'$ from the current plane $\ccalP_{\text{cur}}$. This action  can be applied at node $w$ only if $\ccalP'$ is reachable from $\ccalP_{\text{cur}}$. We say that this is possible if iff (i) \(\texttt{Gap}(\ccalP_{\text{cur}},\ccalP',\hat{\Omega}(0))=\texttt{False}\) or (ii) there exists some block \(b\) with \(L(b)=(\ccalP_{k},(\ccalP_{\text{cur}},\ccalP'))\). In (i), $\hat{\Omega}(0)$ stands for the environment with all movable blocks completely removed. Thus if (i) is satisfied, it means that $\ccalP_{\text{cur}}$ and $\ccalP'$ are pairwise accessible without the `help' of a movable block \textcolor{black}{(i.e. by structure they are accessible)}. The condition in (ii) implies that the considered planes are connected through the movable block $b$. Given \(w\) and \emph{GoTo}\((\ccalP')\), the successor is \(w'=(\ccalP',\,b_h,\,L)\).
  %
  \item \emph{Pick}(\(b\)). If no block is being carried in the symbolic state (\(b_h=\emptyset\)), then any block on the current plane $\ccalP_{\mathrm{cur}}$ may be picked, i.e., for each \(b\) with \(L(b)=(\ccalP_{\mathrm{cur}},g)\) and for some \(g\), the action \emph{Pick}\((b)\) can be selected. Given \(w\) and \emph{Pick}\((b)\), the successor is \(w'=(\ccalP_{\mathrm{cur}},\,b,\,L)\) with $L(b)=\texttt{held}$.
  \item \emph{Place}(\(b_h,\ccalP_{\mathrm{cur}},g\)). If a block is being carried (\(b_h\neq\emptyset\)), it may be placed on the current plane either temporarily (\(g=\emptyset\)) or with any admissible bridging intent \(g=(\ccalP_a,\ccalP_b)\) such that \((b_h,\ccalP_{\mathrm{cur}})\in\texttt{Candidate}(\ccalP_a,\ccalP_b)\). Given \(w\) and \emph{Place}\((b_h,\ccalP_{\mathrm{cur}},g)\), the successor is \(w'=(\ccalP_{\mathrm{cur}},\,\emptyset,\,L)\) with $L(b)=(\ccalP_{\mathrm{cur}},g)$.
\end{itemize}

The BFS algorithm is applied is to find a path over $\ccalG$ connecting the initial node \(w_0=(\ccalP_{\mathrm{start}},\,\emptyset,\,L)\) to a goal node \(w_g=(\ccalP_{\mathrm{cur}},b_h,L)\) satisfying \(\ccalP_{\mathrm{cur}}=\ccalP_{\mathrm{goal}}\), where \(\ccalP_{\mathrm{goal}}\) is the non-movable plane containing \(\bbx_{\rm goal}\). 
%
Let \(w_0,\dots,w_g\) be the shortest node path in \(\ccalG\). This path corresponds to a sequence of \emph{GoTo}/\emph{Pick}/\emph{Place} actions; however, we export, in order, only the \emph{Place}(\(b_i,\ccalP_k,g\)) actions to obtain \(\Pi=\langle s_1,\dots,s_M\rangle\) with \(s_m=(b_i,\ccalP_k,g)\). 

\subsubsection{Non-uniform Mass Function \(f_{\ccalV}\):}
\label{subsubsec:plan-frontier}
In what follows we design a non-uniform mass function $f_{\ccalV}$ guided by the plan $\Pi$. To construct $f_{\ccalV}$, we need first to introduce the function $\mathrm{PlanStep}:\ccalV\rightarrow\mathbb{N}$. This function returns \(\mathrm{PlanStep}(v)=m\) for any node \(v\in\ccalV_T\), \emph{iff} the path from $v_{\Pi}$ 
to \(v\) has exactly \(m\) edges and those edges, in order, realize the steps \(s_1,\dots,s_m\) of \(\Pi\). That is, the sequence of triplets used to sample the move actions that enable reaching \(v\) from \(v_{\text{root}}\) is exactly the same as those in $\Pi$, matching block, placement plane, and intent $g$.
%
Otherwise we set \(\mathrm{PlanStep}(v)=\varnothing\).

We also need to introduce the set $\ccalV^{\max}_{\Pi}$ defined as follows.
\[
\textcolor{black}{\ccalV^{\max}_{\Pi}=\big\{\,v\in\ccalV_T\;:\;\mathrm{PlanStep}(v)=\max_{v'\in\ccalV}\mathrm{PlanStep}(v')\big\}}.
\]
Intuitively, \(\ccalV^{\max}_{\Pi}\) contains those nodes that have progressed \emph{furthest along} \(\Pi\) (see also Fig. \ref{fig:tree-growth}). 

Given $\ccalV^{\max}_{\Pi}$, we define the following non-uniform mass function:
\begin{equation}\label{eq:fv_define}
    f_{\ccalV}(v)=
\begin{cases}
\dfrac{p_{\text{plan}}}{|\ccalV^{\max}_{\Pi}|}, & v\in \ccalV^{\max}_{\Pi},\\[6pt]
\dfrac{1-p_{\text{plan}}}{\,|\ccalV_T\setminus \ccalV^{\max}_{\Pi}|\,}, & v\in \ccalV_T\setminus \ccalV^{\max}_{\Pi},
\end{cases}
\end{equation}
for some $p_{\text{plan}}\in(0,1)$. In words, with probability $p_{\text{plan}}\in(0,1)$ we sample uniformly from \(\ccalV^{\max}_{\Pi}\), and with probability $1-p_{\text{plan}}$ we sample uniformly from $\ccalV_T\setminus\ccalV^{\max}_{\Pi}$. Observe that the higher the value for $p_{\text{plan}}$ is, the more biased $f_{\ccalV}$ gets toward selecting nodes $v\in\ccalV^{\max}_{\Pi}$. 
Uniform sampling can be recovered by choosing
\(p_{\text{plan}}=\frac{|\ccalV^{\max}_{\Pi}|}{|\ccalV_T|}\),
which makes \(f_{\ccalV}\) constant over \(\ccalV_T\).



\subsubsection{Non-uniform Mass \(f_{\ccalS}\):}
\label{subsubsec:triplet-selection}
Given the chosen node \(v=v_{\text{rand}}\), we partition the set \(\ccalS\) of all triplets into three buckets.
\textbf{(i)} \emph{the next triplet recommended by the symbolic plan \(\Pi\)}: If \(\texttt{PlanStep}(v)=m\) and the next symbolic step in $\Pi$ is \(s_{m+1}=(b_i,\ccalP_k,g)\), 
set \(\ccalS_{\text{plan}}(v)=\{\,s_{m+1}\,\}\); otherwise if \(\mathrm{PlanStep}(v)=\varnothing\), then \(\ccalS_{\text{plan}}(v)=\emptyset\).
\textbf{(ii)} \emph{triplets that bridge a gap}:
\(\ccalS_{\text{gap}}(v)=\{\, (b_i,\ccalP_k,(\ccalP_a,\ccalP_b)) \in \ccalS \,\}\).
\textbf{(iii)} \emph{triplets that do not bridge a gap}:
\(\ccalS_{\text{nogap}}(v)=\{\, (b_i,\ccalP_k,\emptyset) \in \ccalS \,\}\).
Note that \(\ccalS=\ccalS_{\text{plan}}(v)\ \cup\ \ccalS_{\text{gap}}(v)\ \cup\ \ccalS_{\text{nogap}}(v)\).

Let \(p_1,p_2,p_3>0\) with \(p_1{+}p_2{+}p_3=1\).
Given the node $v$ the triplet is sampled according to the following mass function
\begin{equation}\label{eq:fs_define}
f_{\ccalS}(s\mid v)=
\begin{cases}
p_1, &  s\in \ccalS_{\text{plan}}(v),\\
\frac{p_2}{|\ccalS_{\text{gap}}(v)|}, & s\in \ccalS_{\text{gap}}(v)\ \\
\frac{p_3}{|\ccalS_{\text{nogap}}(v)|}, & s\in \ccalS_{\text{nogap}}(v)\ 
\end{cases}
\end{equation}
Note that if \(\ccalS_{\text{plan}}(v)=\emptyset\), 
then we redistribute the probability mass by setting \(p_1=0\) and renormalizing over the remaining buckets so that \(p_2,p_3>0,\) and \(p_2+p_3=1\).
Observe that if \(\ccalS_{\text{plan}}(v)\neq\emptyset\), then the higher the value for $p_1$, the more $f_{\ccalS}$ is biased toward selecting triplets that follow the plan $\Pi$. 
Uniform triplet sampling can be recovered by setting
\(p_1 = |S_{\text{plan}}(v)|/|\mathcal S|\), \(p_2 = |S_{\text{gap}}(v)|/|\mathcal S|\), \(p_3 = |S_{\text{nogap}}(v)|/|\mathcal S|\).

\subsection{\textcolor{black}{Designing} \(f_{\ccalC^{\,v}_{\text{block}}}\)}\label{sec:block_placement_density}
\label{subsubsec:sampling-kernels}
In this section, we design a non-uniform distribution \(f_{\ccalC^{\,v}_{\text{block}}}\) that generates the drop-off location of a \textcolor{black}{selected block $b_i$} on a given plane $\ccalP_k$.
Let \(\ccalC^{\,v}_{\text{block}}(b_i,\ccalP_k,\Omega^{\,v})\) be a set collecting all collision-free poses of \(b_i\) in the free space of \(\ccalP_k\) given the environment $\Omega^{\,v}$ associated with the tree node $v$. 
For brevity, let \(\ccalC^{\,v}_{\text{block}}\equiv \ccalC^{\,v}_{\text{block}}(b_i,\ccalP_k;\Omega^{\,v})\).
The distribution \(f_{\ccalC^{\,v}_{\text{block}}}(\bbb_i'\mid b_i,\ccalP_k,g,\Omega)\) depends on the intent $g$ of the \textcolor{black}{selected block $b_i$}. Specifically, we consider the following two cases:

\textcolor{black}{\textbf{Temporary place} (\(g=\varnothing\)): If $g=\varnothing$, i.e., there is no intent to bridge a gap, then the  block \(b_i\) may be placed anywhere on \(\ccalP_k\). Thus, we design \(f_{\ccalC^{\,v}_{\text{block}}}\) to be a uniform distribution over $\ccalC^{\,v}_{\text{block}}$, i.e.,\(f_{\ccalC^{\,v}_{\text{block}}}(\bbb_i' \mid b_i,\ccalP_k,g=\emptyset,\Omega)=\mathrm{Uniform}\!\big(\ccalC^{\,v}_{\text{block}}\big)\).}

\begin{figure}[t]
    \centering
    \subfigure[]{\label{fig:delta1}\includegraphics[width=0.33\textwidth]{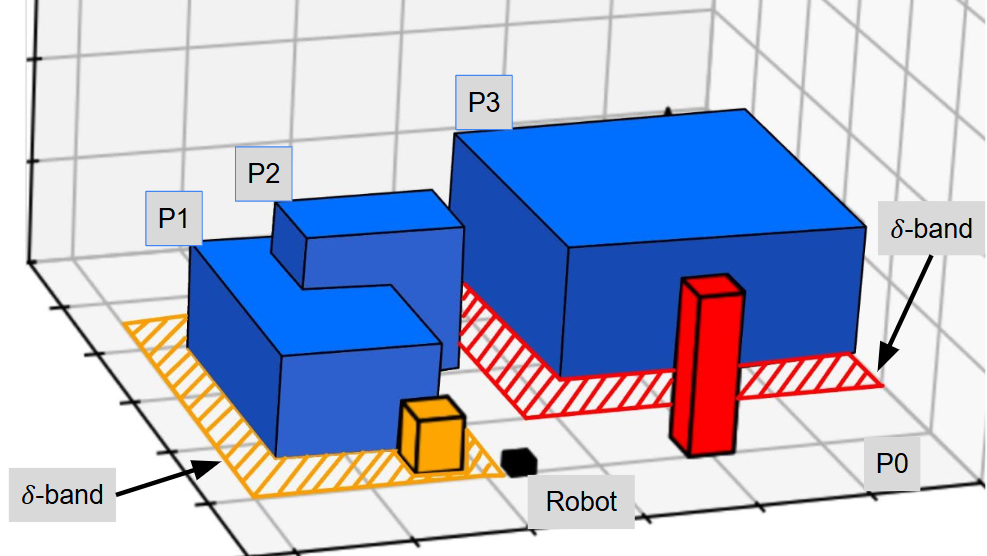}} \hfill
    \subfigure[]{\label{fig:delta2}\includegraphics[width=0.33\textwidth]{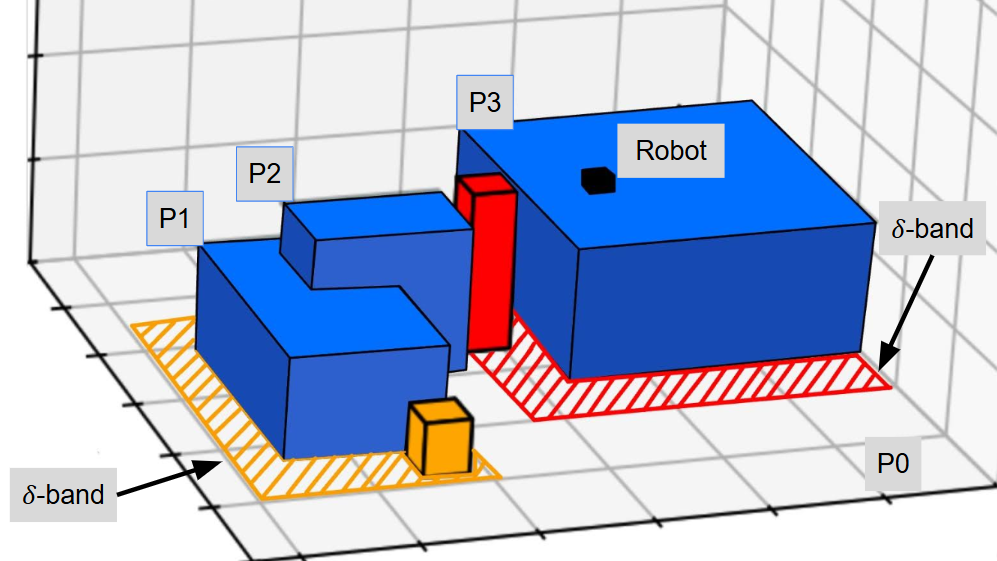}} \hfill
    \caption{Example illustrating $\delta$-bands for two independent gaps in the environment: $(\ccalP_G,\ccalP_1)$ and $(\ccalP_2,\ccalP_3)$. In both Fig. \ref{fig:delta1} and Fig. \ref{fig:delta2}, the yellow hatched region shows the $\delta$-band for the gap $(\ccalP_G,\ccalP_1)$; here the placement plane for the yellow block is $\ccalP_k = \ccalP_G$, which coincides with one of the gap planes, and thus the boundary of $\ccalP_1$ is projected onto $\ccalP_G$ and dilated to obtain the yellow $\delta$-band. The red hatched region shows the $\delta$-band for the gap $(\ccalP_2,\ccalP_3)$; in this case, the placement plane $\ccalP_k = \ccalP_G$ for the red block is neither $\ccalP_2$ nor $\ccalP_3$, and so the boundary of $\ccalP_3$ (even $\ccalP_2$ could be chosen) is projected onto $\ccalP_G$ and dilated to form the red $\delta$-band. Fig. \ref{fig:delta2} shows the red block in a pose within its corresponding $\delta$-band to bridge the gap.}
    \label{fig:delta_band}
\end{figure}

\textbf{Bridge} ($g=(\ccalP_ i,\ccalP_j)$): If $g=(\ccalP_i,\ccalP_j)$, this means that $b_i$ should be placed on $\ccalP_k$ to bridge the gap between two planes $\ccalP_i$ and $\ccalP_j$. \textcolor{black}{Thus, in this case, we design a distribution that biases placements of $b_i$ on $\mathcal{P}_k$ toward a $\delta$-wide band around the orthogonal projection of the relevant supporting plane. Specifically, if $\mathcal{P}_i \neq \mathcal{P}_k$ and $\mathcal{P}_j \neq \mathcal{P}_k$, we project either $\mathcal{P}_i$ or $\mathcal{P}_j$ onto $\mathcal{P}_k$ (see Fig. \ref{fig:case1b}–\ref{fig:case1c}; see also Fig. \ref{fig:delta_band}). If $\mathcal{P}_k = \mathcal{P}_j$, we project $\mathcal{P}_i$ onto $\mathcal{P}_k$. In both cases, the resulting projection is expanded by $\delta$ to form the $\delta$-band (see Fig. \ref{fig:case1b}–\ref{fig:case1c}, where the boundary of Plane 2 is projected onto Plane 1 and then dilated to obtain the $\delta$-band; see also Fig. \ref{fig:delta_band}).}
Let $\ccalC_{\text{gap}}^{\textcolor{black}{\delta}}\subseteq \ccalC^{\,v}_{\text{block}}$ be a set collecting all \textcolor{black}{collision-free poses of $b_i$} 
inside this $\delta$-wide band. Then, we define \(f_{\ccalC^{\,v}_{\text{block}}}\) as follows:
\begin{equation}\label{eq:fc_define}
\begin{aligned}
f_{\ccalC^{\,v}_{\text{block}}}(\bbb_i'\mid b_i,\ccalP_k,&(\ccalP_i,\ccalP_j),\Omega)
= p_{\text{gap}}\cdot \mathrm{Uniform}(\ccalC_{\text{gap}}^{\delta})\\
&+(1-p_{\text{gap}})\cdot \mathrm{Uniform}\!\big(\ccalC^{\,v}_{\text{block}}\setminus \ccalC_{\text{gap}}^{\delta}\big),
\end{aligned}
\end{equation}
with $0<p_{\text{gap}}<1$.
This biases \(\bbb_i'\) toward regions \textcolor{black}{(captured by $\ccalC_{\text{gap}}^{\delta}$)}  that are most likely to bridge the gap while ensuring every collision-free pose in $\ccalC^{\,v}_{\text{block}}$ retains nonzero probability. 
\textcolor{black}{Uniform sampling can be recovered by setting $\delta=0$ which results in $\ccalC_{\text{gap}}^{\delta}=\emptyset$.} 

\subsection{\textcolor{black}{Revising $\Pi$ Due To Infeasible Recommendations}}
\label{subsec:fail-refresh}


\textcolor{black}{As discussed earlier, the symbolic plan $\Pi$ may contain triplets that cannot be executed. 
Recall that a triplet $s_m = (b_i, \mathcal{P}_k, g)$ indicates that the block $b_i$ should be placed on plane $\mathcal{P}_k$ to fulfill the intent $g$. Such a triplet $s_m$ contained in the plan $\Pi$ may be infeasible/non-executable for the following reasons. First, construction of the plan $\Pi$ relies on the \texttt{Candidate} function, introduced in Definition \ref{def:Candidate}, which may generate blocks that may not necessarily be able to bridge desired gaps. Thus, if $g$ in $s_m$ requires bridging a gap, it is possible that $b_i$ may not be able to do so. Recall that construction of $\Pi$ does not include any low-level geometric feasibility tests, as it simply aims to construct fast a high-level sequence of triplets $s_m$ that may result in task completion. 
Second, regardless of the intent $g$ of block $b_i$ on $\mathcal{P}_k$, it is also possible that $\mathcal{P}_k$ is not reachable to the robot despite feasibility checks that occur before applying the \textit{GoTo} action during construction of $\Pi$. This may occur since the feasibility tests associated with the \textit{GoTo} action are not sufficient. For example, if condition (i) of \textit{GoTo} is satisfied (i.e., two planes are connected by structure when no movable blocks exist in the environment), these planes may end up being disconnected in an environment associated with a tree node $v$ due to the presence of other movable blocks. We emphasize again that such low-level geometric feasibility tests are deferred to Alg.~\ref{alg:sample_move} (see lines \ref{alg:check_grasp}, \ref{alg:intention_check}, and \ref{alg:check_drop}), as Alg.~\ref{alg:sample_move} is responsible for generating feasible robot poses and drop-off locations of blocks fulfilling sampled intents $g$.
If Alg.~\ref{alg:sample_move} fails to validate feasibility of a triplet $s_m$ (within $N$ attempts) that is contained in the plan $\Pi$, then we demote $s_m$. Specifically, we remove it from $\mathcal{S}_{\text{plan}}(v)$ and keep it in $\mathcal{S}_{\text{gap}}(v)$ (with no special weight at that node). In this case, $f_{\mathcal{S}}$ is not driven by $\Pi$ anymore by construction.}

\textcolor{black}{As soon as this occurs, in the next outer iteration $(\kappa{+}1)$, after sampling a node $v_{\text{rand}} \sim f_\mathcal{V}$, we recompute a new symbolic plan $\Pi'$ exactly as before, setting $v_{\Pi} = v_{\text{rand}}$ (any node in $\mathcal{V}$ may be chosen). During this recomputation, we explicitly exclude the symbolic action $Place(s_m)$ from the BFS process to prevent generating a new plan $\Pi'$ that again recommends the triplet $s_m$ that Alg.~\ref{alg:sample_move} failed to validate. The resulting plan $\Pi'$ then replaces $\Pi$ as the symbolic plan guiding the non-uniform sampling strategy.  Notice that since $\Pi'$ is invoked at a node $v_{\Pi} \neq v_{\text{root}}$, we have $\textit{PlanStep}(v)=0$ for all nodes (e.g., $v_{\text{root}}$) that are not reachable from $v_{\Pi}$ in the incrementally constructed directed tree $\mathcal{T}$. We repeat this process for each symbolic action deemed infeasible. Note that although $s_m$ is removed from the symbolic plan, it can still be sampled by Alg.~\ref{alg:sampling_planner} due to its presence in $\mathcal{S}_{\text{gap}}(v)$.}

\begin{rem}[High-Level Planner Generating $\Pi$]\label{rem:LLM}
    \textcolor{black}{The BFS planner used to compute the plan $\Pi$ over the graph $\mathcal{G}$ may be replaced by other planners such as A*. Moreover, because $\Pi$ need not be feasible, it can also be generated using heuristic planners, such as Large Language Models (LLMs), that are more user-friendly. In this case, an additional source of infeasibility arises from the fact that such planners are not guaranteed to be sound. In Section \ref{sec:Sim}, we show how LLMs can be prompted to construct $\Pi$ and discuss the associated trade-offs.}
\end{rem}

\section{Probabilistic Completeness}
\label{sec:proofGuarantees}

In this Section, we show that our algorithm is \emph{probabilistically complete}, \textcolor{black}{i.e., that the probability that our algorithm will find a feasible plan $\tau$ (if it exists), addressing Problem \ref{prob_statement}, goes to $1$ as the number of iterations $\kappa$ of Alg. \ref{alg:sampling_planner} goes to infinity.} To formally state this result, we first introduce the notion of \emph{plan clearance} \textcolor{black}{(which is also used in \cite{van2010path,zhang2025namo})}. Consider any plan
\[
\tau \;=\; \langle \tau(1),\dots,\tau(k),\dots,\tau(H)\rangle
\]
returned by Algorithm~\ref{alg:sampling_planner} that solves Problem~\ref{prob_statement}, where each action is of the form
\[
\tau(k) \;=\; \texttt{Move}\big(\,\bbb_{i_k},\,\bbb_{i_k}^{\star}\big)
\]
moving block \(b_{i_k}\) to configuration \(\bbb_{i_k}^{\star}\).
We say that \(\tau\) has \emph{clearance} \(\varepsilon>0\) if, for any alternative plan
\[
\tilde\tau \;=\; \langle \tilde\tau(1),\dots,\tilde\tau(k),\dots,\tilde\tau(H)\rangle
\]
satisfying, for all \(k\in\{1,\dots,H\}\), the following two conditions:\\
(i) \(\tilde\tau(k)=\texttt{Move}\big(\bbb_{i_k},\,\tilde\bbb_{i_k}\big)\) moves the \emph{same block} \(b_{i_k}\) as \(\tau(k)\) (the block identity at step \(k\) is unchanged);\\
(ii)\(\big\|\,\tilde\bbb_{i_k}-\bbb_{i_k}^{\star}\,\big\| \;<\; \varepsilon\) with respect to a fixed norm on the block-pose space,\\
the plan \(\tilde\tau\) is also feasible. In what follows, we concisely denote the satisfaction of (i)–(ii) at step \(k\) by
$\|\tau(k)-\tilde\tau(k)\| \;\le\; \varepsilon$.

\begin{theorem}[Probabilistic completeness]\label{thm:pc}
Assume there exists a feasible plan 
\(\tau=\langle \tau(1),\dots,\tau(H)\rangle\) 
with clearance \(\varepsilon>0\) (as defined above)\textcolor{black}{, and that Assumption~\ref{ass:cand} holds.}
Let the sampling distributions
\(f_{\mathcal V}\), \(f_{\mathcal S_{\mathrm{Cand}}}\), and \(f_{\mathcal C^{\,v}_{\mathrm{block}}}\)
be those defined in Sec. \ref{sec:biased-sampling}, and let
\(p_{\mathrm{plan}},\,p_1,\,p_2,\,p_3,\,p_{\mathrm{gap}}\in (0,1)\).
Then the probability that Algorithms~\ref{alg:sampling_planner}--\ref{alg:sample_move} return a feasible plan converges to \(1\) as the number of iterations \(\kappa\to\infty\).
\end{theorem}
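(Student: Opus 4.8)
The plan is a tube-plus-Borel--Cantelli argument, adapted to the non-uniform node and triplet sampling. Fix the assumed feasible plan $\tau=\langle\tau(1),\dots,\tau(H)\rangle$ of clearance $\varepsilon>0$, with $\tau(k)=\texttt{Move}(\bbb_{i_k},\bbb_{i_k}^{\star})$, and let it induce a nominal sequence of environments $\Omega_0^{\star}=\Omega(0),\Omega_1^{\star},\dots,\Omega_H^{\star}$ and drop-off positions, where executing $\tau$ leaves the robot at a state from which $\bbx_{\text{goal}}$ is $\texttt{NavigReach}$-reachable in $\Omega_H^{\star}$. Using the clearance property I would define, for each $k\in\{0,\dots,H\}$, an \emph{$\varepsilon$-tube} $\Xi_k\subseteq\ccalV$ consisting of all node-states $[\Omega,\bbx]$ obtainable by executing some plan $\tilde\tau$ that moves the same blocks in the same order as $\tau$ through step $k$, keeps every moved block within $\varepsilon$ of its nominal pose, and leaves the robot at a valid drop-off position of step $k$ from which the remaining steps of $\tilde\tau$ (and, for $k=H$, the goal) stay feasible. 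By the definition of clearance, $\Xi_k\neq\emptyset$, $v_{\text{root}}\in\Xi_0$, and every $v\in\Xi_H$ triggers the goal test on line~\ref{alg:goal_check} of Alg.~\ref{alg:sampling_planner}. Moreover, for any $v\in\Xi_k$ with $k<H$, letting $\ccalP_{\star}\in\ccalP^{\text{nm}}$ be the non-movable plane on which $\bbb_{i_{k+1}}^{\star}$ rests, clearance forces the $\varepsilon$-ball about $\bbb_{i_{k+1}}^{\star}$ to be contained in the set $\ccalC^{\,v}_{\text{block}}(b_{i_{k+1}},\ccalP_{\star},\Omega^v)$ of valid placements, hence of positive measure there; and under the mild, standard regularity conditions that $\texttt{ManipReach}$ returns full-dimensional sets whenever nonempty and that $\texttt{NavigReach}$-reachability is an open condition in its endpoints (implicit in the clearance notion), the admissible grasp and drop-off sets for step $k+1$ from $v$ also have positive measure.

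The crux is a per-iteration progress bound that is uniform in the (possibly revised) symbolic plan $\Pi$. Suppose at the start of iteration $\kappa$ the tree contains some $v\in\Xi_k$ with $k<H$; I claim the probability that this iteration adds a node of $\Xi_{k+1}$ is at least $c/(\kappa+1)$ for a constant $c>0$ independent of $\kappa$ and $\Pi$. Indeed: $f_{\ccalV}$ picks $v$ with probability at least $\min\{p_{\text{plan}},1-p_{\text{plan}}\}/|\ccalV_T|\ge c_1/(\kappa+1)$, since at most one node is added per iteration so $|\ccalV_T|\le\kappa+1$; the triplet $(b_{i_{k+1}},\ccalP_{\star},\varnothing)$ always belongs to $\ccalS$ by construction and always sits in the bucket $\ccalS_{\text{nogap}}(v)$, so it is drawn with probability at least $p_3/|\ccalS|=:c_2$, a fixed constant ($\ccalS$ does not grow); with $g=\varnothing$ the intent check on line~\ref{alg:intention_check} is vacuous, so it only remains that the sampled grasp pose falls in the positive-measure subset reachable from $\bbx^v$, that the sampled block pose -- drawn uniformly on $\ccalC^{\,v}_{\text{block}}$ because $g=\varnothing$ -- lands in the $\varepsilon$-ball about $\bbb_{i_{k+1}}^{\star}$, and that the sampled drop-off pose lies in the positive-measure set from which the robot can reach the drop and subsequently continue, each of constant positive probability $c_3,c_4,c_5$; and a single one of the $N\ge1$ inner trials already realizes this. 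Multiplying gives the bound with $c:=c_1c_2c_3c_4c_5$. Note that realizing every witness move with the trivial intent $g=\varnothing$ is exactly what lets the proof bypass the deliberately imperfect \texttt{Gap}/\texttt{Candidate} machinery, so the bound is plan-agnostic; Assumption~\ref{ass:cand} is needed only so that the non-uniform sampling of \eqref{eq:fs_define}--\eqref{eq:fc_define} is well-posed.

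I would then conclude by induction on $k$. The base case $k=0$ holds since $v_{\text{root}}\in\Xi_0$ from the outset. For the inductive step, assume that almost surely the tree contains a node of $\Xi_k$ from some almost-surely finite iteration $\kappa_k$ on; then for every $\kappa\ge\kappa_k$ at which no node of $\Xi_{k+1}$ is yet present, a node of $\Xi_{k+1}$ is added with conditional probability at least $c/(\kappa+1)$, and because $\sum_{\kappa\ge\kappa_k}c/(\kappa+1)=\infty$ we obtain $\prod_{\kappa\ge\kappa_k}\bigl(1-c/(\kappa+1)\bigr)=0$, so almost surely a node of $\Xi_{k+1}$ is eventually added. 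Iterating up to $k=H$ shows that almost surely the tree eventually contains a node of $\Xi_H$, whereupon the goal test on line~\ref{alg:goal_check} passes and Alg.~\ref{alg:sampling_planner} returns a feasible plan; equivalently, the probability of returning a feasible plan within $\kappa$ iterations tends to $1$ as $\kappa\to\infty$. Since nothing used the particular values of $p_{\text{plan}},p_1,p_2,p_3,p_{\text{gap}}\in(0,1)$ or the symbolic planner generating $\Pi$, the guarantee holds for every admissible choice. The main obstacle is precisely this progress bound: because $f_{\ccalV}$ spreads its mass over the growing node set, the per-iteration success probability decays like $1/\kappa$ rather than being bounded below by a constant, so the usual geometric-series argument for RRT-type completeness fails and one must lean on the divergence of the harmonic series; a secondary delicate point is making precise the regularity of $\texttt{ManipReach}$ and $\texttt{NavigReach}$ that keeps the grasp-, block-, and drop-off-sampling events of positive probability all along the tube.
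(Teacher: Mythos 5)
Your proof is correct and rests on the same key quantitative ingredient as the paper's — the per-iteration success probability of order $c/(\kappa+1)$, obtained by multiplying the node-selection mass (at least $\min\{p_{\text{plan}},1-p_{\text{plan}}\}/(\kappa+1)$ since $|\ccalV_T|\le\kappa+1$), a constant triplet-selection mass, and the positive probability of landing in the $\varepsilon$-ball — but it concludes by a genuinely different route. The paper introduces Bernoulli indicators $X_\kappa$ of ``realizing a new step of some $\varepsilon$-close plan,'' sets $Y_\kappa=\sum_i X_i$, proves $\mathbb{E}[Y_\kappa]\to\infty$ and $\operatorname{Var}(Y_\kappa)\le 3\,\mathbb{E}[Y_\kappa]$, and applies Chebyshev's inequality to get $\mathbb{P}(Y_\kappa<H)\to 0$. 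You instead run an induction over tube levels $\Xi_0,\dots,\Xi_H$ and use the divergence of $\sum_\kappa c/(\kappa+1)$ to force $\prod_\kappa\bigl(1-c/(\kappa+1)\bigr)=0$, i.e., almost-sure eventual progress at each level; this is arguably cleaner, avoids the convention the paper must adopt for $X_{\kappa'}$ after the final step is reached, and tolerates constants that are random (node-dependent) but almost surely positive. Two further differences are worth noting. First, you realize every witness move through the always-present trivial-intent triplet $(b_{i_{k+1}},\ccalP_{\star},\varnothing)$, which makes your argument independent of Assumption~\ref{ass:cand}; the paper routes through the gap-intent triplet $s_k^\star$ and therefore needs that assumption precisely to guarantee $s_k^\star\in\ccalS$. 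Your shortcut is sound because the feasibility checks in Alg.~\ref{alg:sample_move} and the goal test in Alg.~\ref{alg:sampling_planner} do not depend on the declared intent, only on the resulting environment. Second, you explicitly bound the grasp- and drop-off-pose sampling probabilities ($c_3,c_5$), which the paper folds into $p_{\mathbb{B},N}(k)$ without separate treatment; the regularity caveat you flag for \texttt{ManipReach}/\texttt{NavigReach} is exactly the point both arguments rely on and neither fully formalizes.
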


\begin{proof}
To show this result, we first introduce the following notations and definitions.
Let \(\ccalT^{\kappa}=\{\ccalV_T^{\kappa},\ccalE_T^{\kappa}\}\) denote the tree constructed by Alg.~\ref{alg:sampling_planner} at the end of iteration \(\kappa\).
Since the plan \(\tau\) has clearance \(\varepsilon>0\), there exist alternative feasible plans \(\tilde\tau\) such that \(\|\tau(k)-\tilde\tau(k)\|\le \varepsilon\) for all \(k\).
Throughout the proof, any reference to \(\tilde\tau(k)\) assumes this \(\varepsilon\)-closeness to \(\tau\).

We define two random variables used to model the sampling process and the computation of a feasible plan.
First, we define Bernoulli random variables \(X_{\kappa}\) that are equal to \(1\) if the following holds for some \(k\in\{1,\dots,H\}\) at iteration \(\kappa\):
(i) a node \(v\) is added to the tree via an edge \((u,v)\) that \textcolor{black}{\emph{realizes step \(k\) of \(\tilde\tau\)}, that is, the edge executes the action $\tilde\tau(k) = \texttt{Move}(\bbb_{i_k},\tilde\bbb_{i_k})$ with \(\|\tilde\bbb_{i_k}-\bbb_{i_k}^{\star}\|\le \varepsilon\)}, and its predecessor node \(u\) corresponds to the realization of step \(k{-}1\) (for \(k=1\), take \(u=v_{\mathrm{root}}\)); and
(ii) no node $v'$ realizing step \(k\) existed in the tree prior to iteration \(\kappa\), i.e., there was no edge ($u',v'$) that was $\varepsilon$ close to $\tau_k$.
Additionally, once a node corresponding to step \(H\) that satisfies both (i) and (ii) is added at some iteration \(\kappa\), no future iterations can satisfy condition (ii) for any \(k\); in this case, by convention, we set \(X_{\kappa'}=1\) for all \(\kappa'>\kappa\).

Second, we define the random variables \(Y_{\kappa}\) that count the number of successes among the Bernoulli variables \(X_1,\dots,X_{\kappa}\); that is,
\(Y_{\kappa}=\sum_{i=1}^{\kappa} X_i\).
By construction of \(X_i\) (and the above convention), \(Y_{\kappa}\ge H\) \emph{iff} the node corresponding to the action \(\tilde\tau(H)\) has been added to the tree \(\ccalV_T^{\kappa}\), i.e., a solution has been found.
Therefore, to prove probabilistic completeness it suffices to show that \(\mathbb{P}(Y_{\kappa}\ge H)\to 1\) as \(\kappa\to\infty\).

Using Chebyshev’s inequality, we bound the failure probability (fewer than \(H\) successful expansions in \(\kappa\) iterations, i.e., the goal is not reached) as follows:
\begin{equation}
\begin{aligned}
\mathbb{P}\!\left(Y_{\kappa}<H\right)
&= \mathbb{P}\!\left(\mathbb{E}[Y_{\kappa}]-Y_{\kappa} \;>\; \mathbb{E}[Y_{\kappa}]-H\right) \\
&\le \mathbb{P}\!\left(\,\lvert Y_{\kappa}-\mathbb{E}[Y_{\kappa}]\rvert \;\ge\; \mathbb{E}[Y_{\kappa}]-H\,\right) \\
&\le \frac{\operatorname{Var}(Y_{\kappa})}{\big(\mathbb{E}[Y_{\kappa}]-H\big)^2}\,,
\end{aligned}
\label{eq:chebyshev-bound}
\end{equation}
where the second inequality is Chebyshev’s bound. We next express \(\mathbb{E}[Y_{\kappa}]\) and \(\operatorname{Var}(Y_{\kappa})\) in terms of the Bernoulli variables \(X_i\). Let \(p_i := \mathbb{P}(X_i=1)\). Then
\begin{align}
\mathbb{E}[Y_{\kappa}]
&= \sum_{i=1}^{\kappa} \mathbb{E}[X_i]
 = \sum_{i=1}^{\kappa} p_i,
\label{eq:EYkappa}
\\[4pt]
\operatorname{Var}(Y_{\kappa})
&= \sum_{i=1}^{\kappa} \operatorname{Var}(X_i)
   + 2 \sum_{1\le i<j\le \kappa} \operatorname{Cov}(X_i,X_j)
\\
&= \sum_{i=1}^{\kappa} p_i\,(1-p_i)
   + 2 \sum_{1\le i<j\le \kappa} \Big(\mathbb{E}[X_i X_j] - p_i p_j\Big).
\label{eq:VarYkappa-expanded}
\end{align}

In what follows we show:
\textbf{(a)} $\lim_{\kappa\to\infty} \mathbb{E}[Y_{\kappa}] = \infty$,
and \textbf{(b)} $\operatorname{Var}(Y_{\kappa}) \;\le\; 3\,\mathbb{E}[Y_{\kappa}] \;\;\text{for all }\kappa$.
Using \textbf{(b)}, we can rewrite the bound in~\eqref{eq:chebyshev-bound} as
\begin{equation}
\mathbb{P}\!\left(Y_{\kappa}<H\right)
\;\le\; \frac{\operatorname{Var}(Y_{\kappa})}{\big(\mathbb{E}[Y_{\kappa}]-H\big)^2}
\;\le\; \frac{3\,\mathbb{E}[Y_{\kappa}]}{\big(\mathbb{E}[Y_{\kappa}]-H\big)^2}\,.
\label{eq:chebyshev-3Ey}
\end{equation}
Combining \textbf{(a)} with~\eqref{eq:chebyshev-3Ey} yields
\[
\lim_{\kappa\to\infty}\mathbb{P}\!\left(Y_{\kappa}<H\right)=0.
\]
Equivalently,
\[
\lim_{\kappa\to\infty}\mathbb{P}\!\left(Y_{\kappa}\ge H\right)=1,
\]
which completes the proof of probabilistic completeness.

\paragraph{\textbf{Proof of (a).}}
We show that \(\lim_{\kappa\to\infty}\mathbb{E}[Y_{\kappa}]=\infty\).
Let \(p_{\kappa}:=\mathbb{P}(X_{\kappa}=1)\).
Since \(\mathbb{E}[Y_{\kappa}]=\sum_{i=1}^{\kappa}p_i\), it suffices to establish a divergent lower bound on \(\sum_{i=1}^{\kappa}p_i\).

Fix a step index \(k\in\{1,\dots,H\}\) that has not yet been sampled.
Let \(v_{k-1}\) denote a parent node in the tree from which step \(k\) can be realized (for \(k=1\), \(v_0=v_{\mathrm{root}}\)).
Let \(s_k^\star\) be the triplet that enables step \(\tilde\tau(k)\) at \(v_{k-1}\).
During iteration \(\kappa\), the event \(\{X_\kappa=1\}\) occurs if we:
(i) select the suitable predecessor node \(v_{k-1}\);
(ii) select the triplet \(s_k^\star\); and
(iii) succeed in sampling block configuration (\hyperref[sec:ii1]{ii.1}-\hyperref[sec:ii3]{ii.3}) so that the resulting action satisfies \(\|\tilde\tau(k)-\tau(k)\|\le \varepsilon\).
We lower bound each stage as follows.

\emph{Node choice.}
Using the node distribution \(f_{\ccalV}\), every node has nonzero mass (since \(p_{\mathrm{plan}}\in(0,1)\) and sampling within buckets is uniform).
Algorithm~\ref{alg:sampling_planner} adds at most one node per iteration, hence \(|\ccalV_T^{\kappa}|\le \kappa+1\). Therefore by construction of $f_\ccalV$ in [\ref{eq:fv_define}]
\begin{equation}
\begin{aligned}
\underbrace{f_\ccalV\!\big(v_{k-1}\,\big|\,\kappa\big)}_{=:~p_V(\kappa)}
&\;\ge\; \frac{\min\{\,p_{\mathrm{plan}},\,1-p_{\mathrm{plan}}\,\}}{\kappa+1}
\\
&\;=\; \frac{c_V}{\kappa+1},
\qquad c_V\in(0,1).
\end{aligned}
\label{eq:pv-lb-rewrite}
\end{equation}
where $c_V := \min\{p_{\text{plan}},\,1-p_{\text{plan}}\}$

\emph{Triplet choice.}
At node \(v_{k-1}\), the candidate set \(\ccalS\) is finite. 
By construction of \(f_{\ccalS}\) in [\ref{eq:fs_define}] (bucket masses \(p_1,p_2,p_3\in(0,1)\) and uniform within buckets), we get
\begin{equation}
\begin{aligned}
\underbrace{f_{\ccalS}\!\big(s_k^\star\,\big|\,v_{k-1}\big)}_{=:~p_S(k)}
&\;\ge\;
\min\!\left\{\,p_1,\frac{p_2}{|\ccalS_{\mathrm{gap}}(v_{k-1})|},\frac{p_3}{|\ccalS_{\mathrm{nogap}}(v_{k-1})|}\right\}
\\
&\;=\; c_S(k)\;\in(0,1).
\end{aligned}
\label{eq:ps-lb-rewrite}
\end{equation}
where $c_S(k)=\min\!\left\{\,p_1,\frac{p_2}{|\ccalS_{\mathrm{gap}}(v_{k-1})|},\frac{p_3}{|\ccalS_{\mathrm{nogap}}(v_{k-1})|}\right\}$. Observe that $c_S(k)>0$, since $p_1,p_2,p_3>0$ \textcolor{black}{and because Assumption~\ref{ass:cand} ensures \(s_k^\star\in\ccalS\); otherwise \(c_S(k)\) could be zero}.
Since \(k\in\{1,\dots,H\}\) ranges over a finite set, we can fix a uniform step–independent lower bound
\begin{equation}
c_S\;:=\;\min_{1\le k\le H} c_S(k)\ \in (0,1).
\label{eq:cS-uniform}
\end{equation}

\emph{Block–configuration choice.}
Let \(\mathbb{B}_k:=\{\bbb:\ \|\bbb-\bbb_{i_k}^{\star}\|<\varepsilon\}\) denote the \(\varepsilon\)-ball around the target configuration given by $\tau(k)$.
Since by construction of \(f_{\mathcal C^{\,v}_{\mathrm{block}}}\) [\ref{eq:fc_define}], the distribution is uniform over two feasible subsets of feasible block configurations with weights \(p_{\mathrm{gap}}\in(0,1)\) and \(1-p_{\mathrm{gap}}\); and since \(\varepsilon>0\), we obtain a strictly positive single–draw success probability $p_\mathbb{B}(k)>0$ which is the probability of choosing a block configuration $\tilde\bbb_{i_k}$ inside the $\varepsilon$-ball $\mathbb{B}_k$.
With \(N\ge 1\) i.i.d.\ attempts per iteration, the probability of at least one success is
\begin{equation}
p_{\mathbb{B},N}(k)\;:=\;1-\big(1-p_\mathbb{B}(k)\big)^{N}\ \in (0,1).
\label{eq:pGN-rewrite}
\end{equation}
Since \(H\) is finite, we can define the step–uniform constant
\begin{equation}
c_{\mathbb{B},N}\;:=\;\min_{1\le k\le H}\,p_{\mathbb{B},N}(k)\ \in (0,1).
\label{eq:cGN-rewrite}
\end{equation}

\emph{Per–iteration success lower bound.}
Combining the above three stages, we get
\begin{equation}
\begin{aligned}
p_{\kappa}
&\;=\;\mathbb{P}(X_{\kappa}=1)
\\
&\;\ge p_V(\kappa)\cdot p_S \cdot p_{\mathbb{B},N}
\\
&\;=\; \frac{c_V}{\kappa+1}\;\cdot\; c_S\;\cdot\; c_{\mathbb{B},N}
=\; \frac{C}{\kappa+1},
\end{aligned}
\label{eq:pkappa-mult}
\end{equation}
where, $C=c_V\cdot c_S\cdot c_{\mathbb{B},N}\in(0,1)$.
Consequently,
\[
\mathbb{E}[Y_{\kappa}]
=\sum_{i=1}^{\kappa} p_i
\;\ge\; \sum_{i=1}^{\kappa}\frac{C}{i+1}
\ \xrightarrow[\kappa\to\infty]{}\ \infty,
\]
which completes \textbf{(a)}.

\paragraph{\textbf{Proof of (b).}}
Starting from \eqref{eq:VarYkappa-expanded},
\begin{equation*}
\operatorname{Var}(Y_{\kappa})
= \sum_{i=1}^{\kappa} p_i(1-p_i)
  + 2 \sum_{1\le i<j\le \kappa} \Big(\mathbb{E}[X_i X_j] - p_i p_j\Big).
\end{equation*}
For the first term, \(p_i(1-p_i)\le p_i\), hence
\begin{equation*}
\sum_{i=1}^{\kappa} p_i(1-p_i)\ \le\ \sum_{i=1}^{\kappa} p_i\ =\ \mathbb{E}[Y_{\kappa}].
\end{equation*}
For the covariance term, since \(X_i X_j \le X_i\),
\begin{equation*}
\mathbb{E}[X_i X_j] - p_i p_j\ \le \mathbb{E}[X_i] - p_i p_j\ \le\ p_i(1-p_j)\ \le\ p_i,
\end{equation*}
and therefore
\begin{equation*}
2 \sum_{1\le i<j\le \kappa} \Big(\mathbb{E}[X_i X_j] - p_i p_j\Big)
\ \le\ 2 \sum_{i=1}^{\kappa} p_i
\ =\ 2\,\mathbb{E}[Y_{\kappa}].
\end{equation*}
Combining the two bounds gives
\begin{equation}
\operatorname{Var}(Y_{\kappa})
\ \le\ \mathbb{E}[Y_{\kappa}] + 2\,\mathbb{E}[Y_{\kappa}]
\ =\ 3\,\mathbb{E}[Y_{\kappa}],
\label{eq:Var-le-3EY}
\end{equation}
which proves \textbf{(b)}.

\end{proof}

\section{Experimental Validation} \label{sec:Sim}

\textcolor{black}{In this section,  we evaluate \textcolor{black}{\textbf{BRiDGE}}, our sampling–based planner, in setups of varying complexity. Our evaluation focuses on planning runtime, the horizon $H$ of the synthesized plan, effect of various biasing parameters and the 
\textcolor{black}{robustness of the algorithm under both \emph{exact} and \emph{over-approximate} implementations of the \texttt{Candidate} function, which in turn leads to \emph{initially feasible} or \emph{initially infeasible} symbolic plans~$\Pi$.}
Section~\ref{sec:PRM} summarizes the implementation of all utility functions—including \texttt{NavigReach}, \texttt{ManipReach}, \texttt{Gap}, and \texttt{Candidate}—as well as the baseline strategies used for comparison. Section~\ref{sec:scalability} presents scalability results across environments with increasing numbers of planes, blocks, and required plan horizons, highlighting the effect of symbolic biasing and the impact of \textcolor{black}{incorrect} candidates on runtime. Section~\ref{subsec:cs1} provides comparative experiments against uniform sampling and LLM-driven non-uniform sampling strategies. In Section~\ref{subsec:cs2}, we discuss the effect of the various biasing parameters on the runtimes of the sampling based planner. Finally, Section~\ref{subsec:cs3} reports hardware experiments on a heterogeneous two-robot system comprising a humanoid and a quadruped robot, demonstrating the practicality of our approach in real multi-level environments. All simulations were implemented in Python~3 and executed on a machine with an Intel Core i7-8565U CPU (1.8~GHz) and 16~GB RAM. Videos for the experiments can be found in \url{https://vimeo.com/1151395396?share=copy&fl=sv&fe=ci} and the code can be found in \url{https://github.com/kantaroslab/BRiDGE}
}


\subsection{\textcolor{black}{Implementation Details, Evaluation Metrics, \& Baselines}}\label{sec:PRM}

In what follows, we present implementation details used by  \textcolor{black}{\textbf{BRiDGE}}, including the utility functions introduced in Sections \ref{sec:PF} and \ref{sec:settingUP} (used to set up our planner), evaluation metrics, and baseline methods.

\textbf{Robot Navigation Capabilities:}\label{subsec:prm} \textcolor{black}{Throughout this section, we consider a holonomic robot, i.e., one that can follow any desired path. We assume the robot has a climbing policy that allows it to move from a plane $\mathcal{P}_i$ to a plane $\mathcal{P}_j$ \textcolor{black}{if their height difference is less than $\delta_H = 1.2$ units and their lateral separation is less than $\delta_L = 2$ units.} 
Under these specifications, we can efficiently determine whether a dynamically feasible path exists between two positions $\mathbf{x}_i$ and $\mathbf{x}_j$, which enables the construction of the utility function $\texttt{NavigReach}(\mathbf{x}_i, \mathbf{x}_j, \Omega(t))$.}
To speed up these reachability checks, we precompute a probabilistic roadmap (PRM) \cite{kavraki1996probabilistic}. The PRM takes the form of a graph $\mathcal{G}_G = (\mathcal{V}_G, \mathcal{E}_G)$ where the set of nodes contains obstacle-free robot configurations, and the set of edges edges represent dynamically feasible connections between nodes. Nodes may lie on both static planes ($\mathcal{P}^{\text{nm}}$) and on the surfaces of movable blocks ($\mathcal{P}^{\text{m}}$). Then, $\texttt{NavigReach}(\mathbf{x}_i, \mathbf{x}_j, \Omega^v)$ simply returns \texttt{True} if a path exists between $\mathbf{x}_i$ and $\mathbf{x}_j$ in $\mathcal{G}_G$ defined over an environment $\Omega^v$.
This PRM is initially constructed at the root node of the planning tree but evolves as the tree grows. Each newly generated tree node corresponds to a different arrangement of movable blocks, requiring local updates to the PRM:
(i)  \textcolor{black}{picking up a block removes all PRM nodes whose configurations lie on that
block’s top surface $\mathcal{P}_i$, and consequently all edges incident to those
nodes.} 
(ii) it may also add edges between nearby nodes (within the PRM connection radius) that become collision-free once the block is removed;
(iii) placing a block removes edges whose straight-line segments intersect the block’s new volume; and
(iv) \textcolor{black}{adds edges between nodes on the block’s top surface and nearby nodes on planes that become reachable through that placement.}
These incremental modifications ensure the PRM remains consistent with the evolving environment geometry and connectivity. We emphasize that more complex robot dynamics can be considered as long as a $\texttt{NavigReach}(\mathbf{x}_i, \mathbf{x}_j, \Omega(t))$ function is provided; for instance this function can be implemented using kino-dynamic planners \cite{csucan2009kinodynamic}.

\textbf{Robot Manipulation Capabilities:} 
We also assume that the robot has a gripper allowing it to manipulate movable blocks.
\textcolor{black}{We approximate $\texttt{ManipReach}(b_i,\Omega^v)$ by all collision-free robot 
configurations within a radius $\delta_M=1.9$ units of $b_i$, which suffices for both 
picking and placing operations in our implementation.}

\textbf{Implementation of Function $\texttt{Gap}(\ccalP_i,\ccalP_j,\Omega^v)$:}  To evaluate $\texttt{Gap}(\ccalP_i,\ccalP_j,\Omega^v)$, we simply check whether the PRM associated with node $v$ contains any path from a node on $\ccalP_i$ to a node on $\ccalP_j$; if none exists, $\texttt{Gap}$ returns \texttt{True}. \textcolor{black}{Note that this is a conservative implementation of the function defined in Definition~\ref{def:Gap}, as it considers only PRM nodes/configurations lying on $\ccalP_i$ and $\ccalP_j$, rather than all possible configurations on these planes.}

\textbf{Implementation of Function $\texttt{Candidate}(\ccalP_i,\ccalP_j)$:} 
\texttt{Candidate}: We implement this function with a lightweight geometric height check that returns a conservative set (\textcolor{black}{\emph{over-approximate} set that includes all true candidates and may add false positives}). The goal is to reduce the search space by pruning obviously incorrect choices. Concretely, we include $(b_n,\ccalP_k)$ in the output set of $\texttt{Candidate}(\ccalP_i,\ccalP_j)$ if placing $b_n$ \textcolor{black}{anywhere} on $\ccalP_k$ would make the top surface of $b_n$ within the robot’s step-height range \textcolor{black}{(determined by $\delta_H$ and $\delta_L$)} of both $\ccalP_i$ and $\ccalP_j$ (so the robot can step up or down onto it from either plane). Incorrect candidates are tolerated and later filtered by feasibility checks in Alg.~\ref{alg:sampling_planner}. \textcolor{black}{Note that, due to the right-polygonal-prism structure of all blocks, a pair $(b_n, \ccalP_k)$ returned by $\texttt{Candidate}(\ccalP_i,\ccalP_j)$, is an incorrect candidate only if there is insufficient space on $\ccalP_k$ to place $b_n$—either because the bottom surface of $b_n$ is larger than the top surface of $\ccalP_k$, or because multiple blocks already occupy $\ccalP_k$, leaving no room for $b_n$); see the placement requirements for movable blocks on non-movable planes in Section \ref{sec:PF}.  }

\textbf{Configuring the Proposed Sampling Strategy:} Unless otherwise specified, we configure our non-uniform sampling-strategy by setting (i) $p_{\text{plan}} = 0.9$ in $f_{\ccalV}$; (ii) $(p_1,p_2,p_3)=(0.85,\,0.15,\,0.05)$ in $f_{\ccalS}$; and (iii) $p_{\text{gap}}=0.9$ in $f_{\ccalC^{\,v}_{\text{block}}}$. Also, we use symbolic plan $\Pi$ generated by a BFS planner. 

\textbf{Baselines:} We compare \textcolor{black}{\textbf{BRiDGE}} with two baselines that differ from our planner only in their sampling strategy; both our method and the baselines are run for $K_{\text{max}}=10,000$ iterations and $N=100$. We consider the following setups:\footnote{\textcolor{black}{We also attempted to directly solve the considered problem using LLMs (GPT 5.1) as end-to-end planners (no sampling was involved). While LLMs are capable of generating high-level plans, defined as sequences of triplets (see the non-uniform LLM-driven sampling strategy), we observed that they cannot perform low-level spatial reasoning and, therefore, cannot generate low-level plans with feasible robot poses and object drop-off locations.} 
}

(i) \textit{Uniform Sampling}: Node selection in Alg.~\ref{alg:sampling_planner} is uniform over the search tree, and triplets (block, placement plane, intent) are drawn uniformly from the candidate superset $\ccalS$. We still use the biased strategy ($f_{\ccalC^{v}_{\text{block}}}$) for sampling block configurations; \textcolor{black}{unless otherwise specified, we set $p_{\text{gap}}=0.9$ to ensure meaningful comparisons against our proposed non-uniform sampling strategy. This choice is deliberate, as we observed that a fully uniform strategy cannot solve planning problems with horizon $H\geq 3$ due to the large search space. Notice that this setup does not require using the plan $\Pi$ at all.
We emphasize that this framework is probabilistically complete as it can be recovered by our method by appropriately setting $p_{\text{plan}}, p_1, p_2$, and $p_3$ as discussed in Section \ref{sec:biased-sampling}, without violating Theorem \ref{thm:pc}.}

\textcolor{black}{(ii) \textit{Non-Uniform LLM-driven Sampling}: This framework shares exactly the same sampling strategy as ours (including values for all parameters). The only difference is that we replace the BFS planner with an LLM, as discussed in Remark~\ref{rem:LLM}, \textcolor{black}{and thus this framework is probabilistically complete as well.}
Concretely, we prompt GPT~5.1 to generate a symbolic plan \(\Pi\) over planes and blocks using the same high-level information that is available to the BFS planner: the set of planes and blocks, gap relations between planes, the available high-level actions (\textsc{GoTo}, \textsc{Pick}, \textsc{Place} with placement intent), and simple task constraints.
This information is provided in a structured, JSON-like prompt that describes the environment and admissible actions, and the LLM returns a sequence of symbolic actions in exactly the same triplet format as the BFS-produced plan \(\Pi\); \textcolor{black}{example for our prompts can be found in} \url{https://github.com/kantaroslab/BRiDGE}. Since the input--output interface matches that of the BFS planner, the LLM-guided plan can be used as a drop-in replacement within our sampling-based planner.
}

\textbf{Evaluation Metrics:} As evaluation metrics, we use (i)
the runtime to compute the first feasible plan, (ii) the horizon $H$ of the designed plan, \textcolor{black}{(iii) the size of the constructed trees}, 
and (iv) number of times  a symbolic plan $\Pi$ was regenerated \textcolor{black}{(also referred to as queries to the BFS planner)} due to biasing towards infeasible directions. The reported metrics are averaged over 3 trials.

\subsection{Scalability Analysis with Problem Complexity}\label{sec:scalability}

\textcolor{black}{In this section, we evaluate the scalability of \textcolor{black}{\textbf{BRiDGE}} with respect to the number of non-movable planes, the number of movable blocks, and the planning horizon $H$ (i.e., the number of actions required to make the goal region accessible). We consider cases where the minimum possible horizon, denoted by $H_{\text{min}}$, is $H_{\text{min}} \in \{2,6\}$. For each horizon, we generate nine environments by taking all permutations of three plane counts and three block counts. Specifically, when $H_{\text{min}} = 2$, we consider numbers of non-movable planes $|\mathcal{P}^{\text{nm}}| \in \{3,6,9\}$ and numbers of movable blocks $|\mathcal{P}^{m}| \in {2,5,10}$. Similarly, when $H = 6$, we consider $|\mathcal{P}^{\text{nm}}| \in \{6,9,12\}$ and $|\mathcal{P}^{m}| \in \{3,6,10\}$.}

\begin{table}[t]
\centering
\caption{Results for Horizon $H_{\text{min}}=2$ using the BFS-based biasing strategy. 
Reported for BFS bias under: (\emph{initially feasible plan $\Pi$}) and (\emph{initially infeasible plan $\Pi$}: time, queries).} 
\label{tab:horizon2}
\begin{tabular}{|c|c|c|c|}
\hline
\textbf{$|\ccalP^{\text{nm}}|$} & \textbf{$|\ccalB^{\text{m}}|$} 
& \textbf{BFS bias} 
& \textbf{BFS bias} \\[-2pt]
 &  & \small{[initial $\Pi$ feasible]} & \small{[initial $\Pi$ infeasible]} \\
 &  &  \small{(time (sec))} & \small{(time (sec), queries)}\\
\hline
3  & 2  & 7.3 & 18, 2 queries \\
3  & 5  & 7.9 & 26, 3 queries \\
3  & 10 & 7.1 & 52, 5 queries \\
6  & 2  & 7.2 & 20, 2 queries \\
6  & 5  & 7.3 & 33, 4 queries \\
6  & 10 & 7.2 & 56, 5 queries \\
9  & 2  & 7.1 & 19, 2 queries \\
9  & 5  & 8.2 & 28, 3 queries \\
9  & 10 & 7.9 & 57, 5 queries \\
\hline
\end{tabular}
\end{table}

\begin{table}[t]
\centering
\caption{Results for Horizon $H_{\text{min}}=6$ using the BFS-based biasing strategy. 
Reported for BFS bias under: (\emph{initially feasible plan $\Pi$}) and (\emph{initially infeasible plan $\Pi$}: time, queries).}
\label{tab:horizon6}
\begin{tabular}{|c|c|c|c|}
\hline
\textbf{$|\ccalP^{\text{nm}}|$} & \textbf{$|\ccalB^{\text{m}}|$} 
& \textbf{BFS bias} 
& \textbf{BFS bias} \\[-2pt]
 &  & \small{[initial $\Pi$ feasible]} & \small{[initial $\Pi$ infeasible]} \\
 &  &  \small{(time (sec))} & \small{(time (sec), queries)}\\
\hline
6  & 3  & 7.6 & 24, 3 queries \\
6  & 6  & 7.5 & 35, 4 queries \\
6  & 10 & 7.9 & 49, 5 queries \\
9  & 3  & 8.1 & 26, 3 queries \\
9  & 6  & 7.7 & 38, 4 queries \\
9  & 10 & 7.9 & 54, 6 queries \\
12 & 3  & 7.8 & 26, 3 queries \\
12 & 6  & 8.3 & 39, 4 queries \\
12 & 10 & 8.1 & 48, 5 queries \\
\hline
\end{tabular}
\end{table}


\textcolor{black}{
In all case studies, 
\textcolor{black}{we observed that when the \texttt{Candidate} function was \emph{exact} (i.e., produced no false positives), the initial high-level plans~$\Pi$ generated by the BFS planner were \emph{initially feasible}.}
\footnote{\textcolor{black}{In a general, a plan $\Pi$ may still be infeasible even if the \texttt{Candidate} function is exact, i.e., it returns only valid blocks; see Section \ref{subsec:fail-refresh}.}} To evaluate robustness against imperfect implementations of the \texttt{Candidate} function and potentially infeasible plans, we tested two settings:
(i) the initial plan $\Pi$ is feasible; (ii) the initial plan $\Pi$ is infeasible, enforced by manually inserting incorrect candidate blocks in the output of the \texttt{Candidate} function (\textcolor{black}{thus mimicking an over-approximate implementation of the function}). The results for $H_{\text{min}}= 2$ are summarized in Table~\ref{tab:horizon2} under both settings. When the initial plan $\Pi$ is feasible, the runtimes of our method to compute the first feasible solution $\tau$, remain almost constant (approximately 7 s) across all environments, regardless of their complexity. This occurs because the BFS plan $\Pi$ consistently steers the sampling strategy toward feasible actions, while still allowing some random exploration for geometric checks and trajectories not prescribed by $\Pi$. When $\Pi$ may include infeasible steps, these are pruned as failures are detected, triggering recomputation of $\Pi$, as described in Section~\ref{subsec:fail-refresh}.} The “queries’’ in the third column of the table quantify this overhead, i.e., the number of times $\Pi$ was recomputed. Runtimes scale roughly with the number of re-plans (e.g., 18 s for 2 queries up to 57 s for 5 queries). 
\textcolor{black}{Note that in all the cases the horizon of the designed plan was $H=2$ meaning the planner found the plan with the shortest possible horizon.}
Similar trends appear for the case $H_{\text{min}}= 6$, summarized in Table~\ref{tab:horizon6}; \textcolor{black}{the horizon of the designed plan was $H=H_{\text{min}}=6$ in all cases.} The main difference compared to the $H_{\text{min}} = 2$ case is that runtimes are slightly higher, which is expected as the planning horizon increases. 



\begin{figure*}[t]
    \centering
    \subfigure[]{\label{fig:case2a}\includegraphics[width=0.23\textwidth]{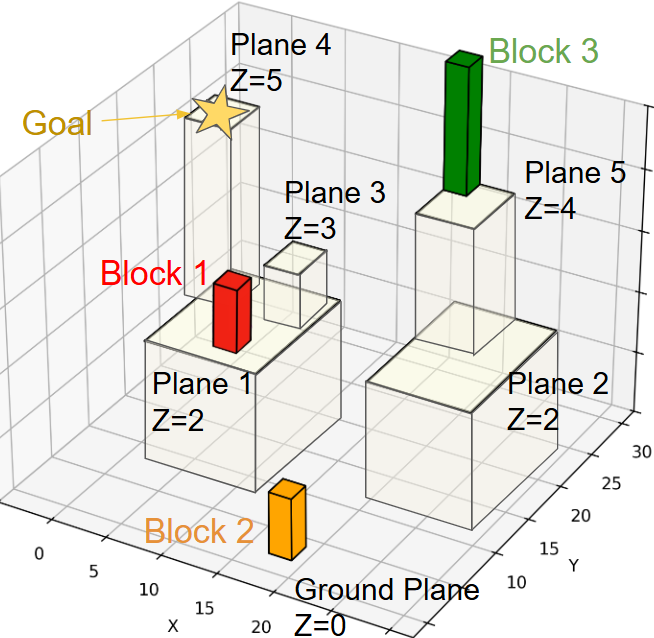}} \hfill
    \subfigure[]{\label{fig:case2b}\includegraphics[width=0.23\textwidth]{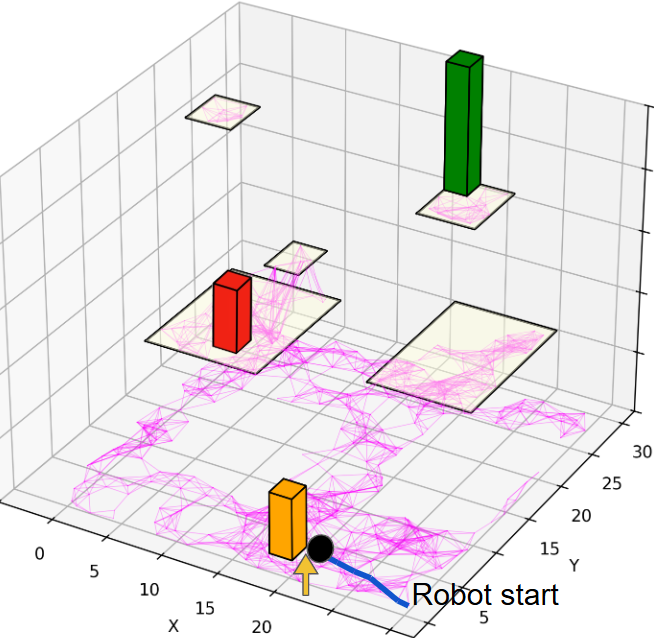}} \hfill
    \subfigure[]{\label{fig:case2c}\includegraphics[width=0.23\textwidth]{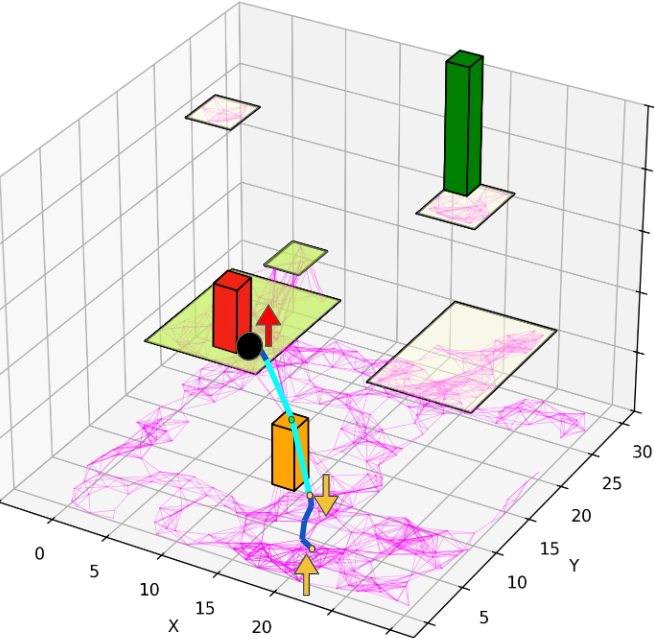}} \hfill
    \subfigure[]{\label{fig:case2d}\includegraphics[width=0.23\textwidth]{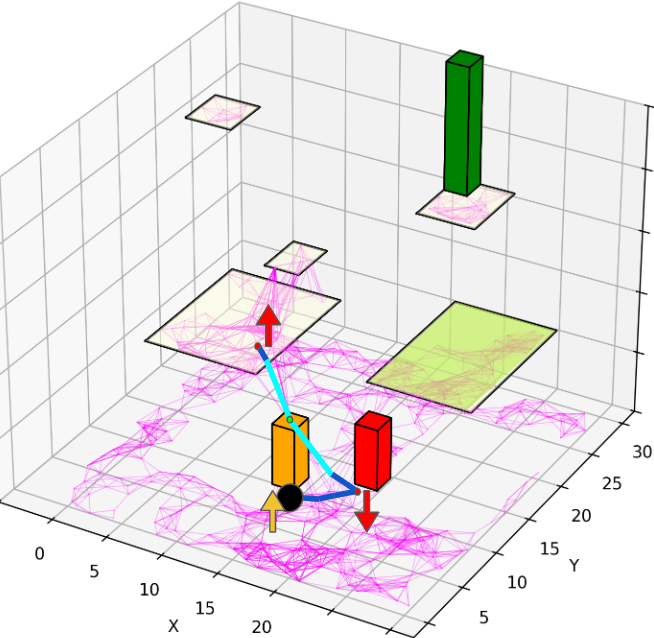}} \\
    \subfigure[]{\label{fig:case2e}\includegraphics[width=0.23\textwidth]{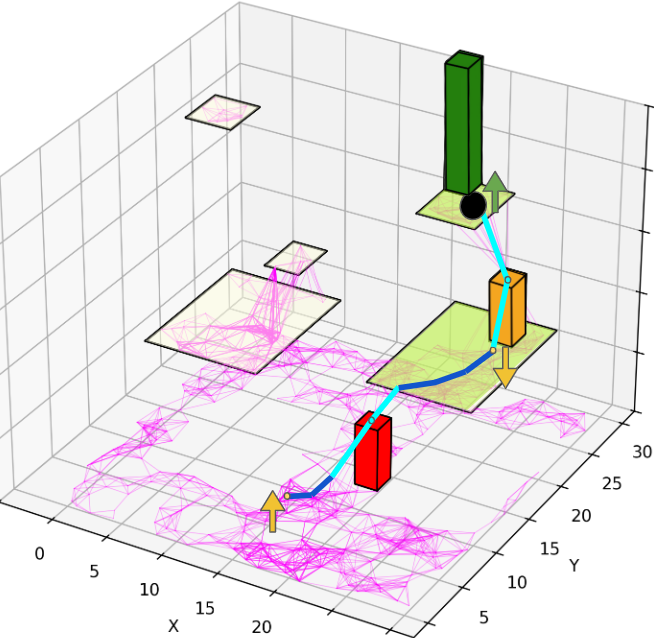}} \hfill
    \subfigure[]{\label{fig:case2f}\includegraphics[width=0.23\textwidth]{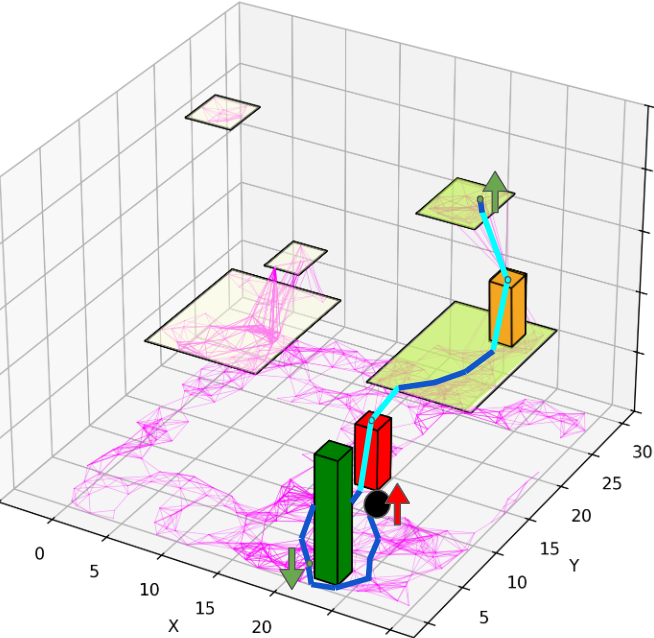}} \hfill
    \subfigure[]{\label{fig:case2g}\includegraphics[width=0.23\textwidth]{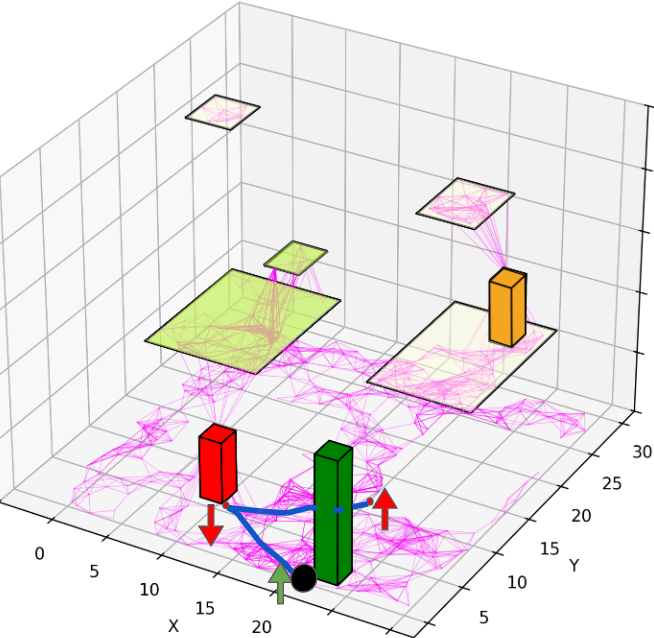}} \hfill
    \subfigure[]{\label{fig:case2h}\includegraphics[width=0.23\textwidth]{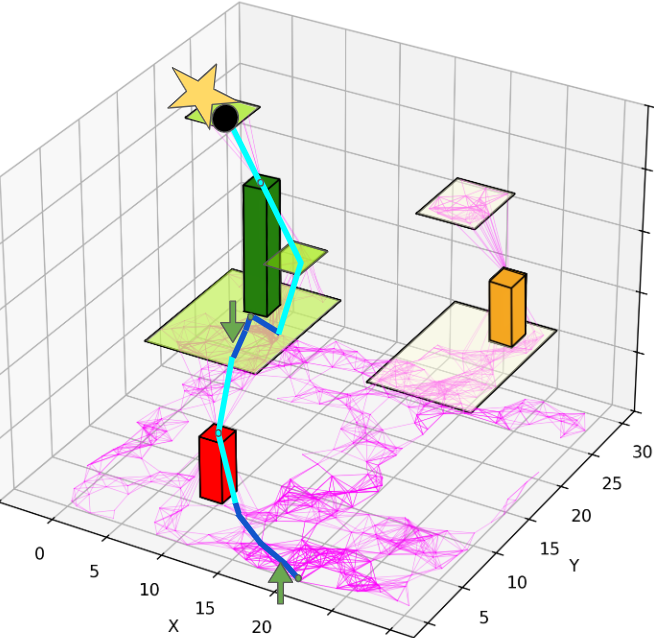}}
    \caption{Snapshots after each $\texttt{Move}(\bbb_i, \bbb_i')$ in the recovered plan $\tau$.  Robot trajectory on a plane is shown in \textcolor{blue}{blue}; inter–plane trajectories are in \textcolor{cyan}{cyan}. Upward arrows mark \emph{block pick} action; downward arrows mark \emph{block place} action. \textcolor{magenta}{Pink} lines show the PRM connectivity (Sec. \hyperref[subsec:prm]{6.1})). Planes reachable from robot position are highlighted in light green. 
    \textbf{Task.} Fig.~\ref{fig:case2a} shows the initial environment (Ground plus Planes~1–5, and heights (Z)). Blocks~1 (red) and~2 (orange) are 1\,unit tall; Block~3 (green) is 2\,units; the robot can ascend/descend at most $\delta_H=1.2$\,unit. The start is on Ground; the goal ($\star$) is on Plane~4. 
    Fig.~\ref{fig:case2b} shows the initial PRM. 
    Figs.~\ref{fig:case2c}–\ref{fig:case2e}: Blocks~1 and~2 are used as intermediate steps to reach Plane~5 and \emph{pick} Block~3. 
    Fig.~\ref{fig:case2f}: Block~3 is \emph{placed} temporarily on Ground. 
    Fig.~\ref{fig:case2g}: Block~1 is reused to establish connectivity with Plane~1. 
    Fig.~\ref{fig:case2h}: Block~3 is \emph{placed} to connect Planes~3 and~4, enabling the robot to reach the goal region.
}
    \label{fig:case2}
\end{figure*}

\textcolor{black}{To more clearly highlight the complexity of the planning problems addressed by our method, and the long-horizon reasoning and planning behaviors it exhibits, we next provide a detailed walkthrough of the plan generated by our algorithm for a representative case study with parameters $H_{\text{min}} = 6$, $|\mathcal{P}^{\text{nm}}| = 6$ (including the ground plane), and $|\mathcal{B}^{\text{m}}| = 3$; see Fig.~\ref{fig:case2}. The heights of the non-movable (non-ground) planes $\ccalP_1$, $\ccalP_2$, $\ccalP_3$, $\ccalP_4$, and $\ccalP_5$ are $2, 2, 3, 5,$ and $4$. The goal region lies on plane $\ccalP_4$. The robot is initially located on the ground plane and therefore the goal region is not accessible given its navigation capabilities discussed in Section~\ref{sec:PRM}. 
To reach the goal region, the robot leverages three movable blocks: block $b_1$ of height $h_1 = 1$ initially located on $\mathcal{P}_1$; block $b_2$ of height $h_2 = 1$ initially located on the ground plane; and block $b_3$ of height $h_3 = 2$ initially located on $\mathcal{P}_5$. Observe in Fig.~\ref{fig:case2} that block $b_3$ is the only block that can be used to bridge the gap between $\mathcal{P}_3$ and $\mathcal{P}_4$, where the goal region lies, by placing it on $\ccalP_1$. However, $b_3$ is not initially accessible to the robot. Access requires first reaching $\mathcal{P}_2$ and then $\mathcal{P}_5$, but neither plane is directly reachable from the ground plane, and $\mathcal{P}_5$ is not reachable from $\mathcal{P}_2$ either.
Given the robot's navigation and manipulation capabilities, the only way to access $\mathcal{P}_5$ is to use both blocks $b_1$ and $b_2$: one to bridge the gap between the ground plane and $\mathcal{P}_2$, and the other to bridge the gap between $\mathcal{P}_2$ and $\mathcal{P}_5$. However, only $b_2$ is initially accessible (Fig.~\ref{fig:case2b}). Thus, the robot first moves $b_2$ next to $\mathcal{P}_1$, allowing it to climb to $\mathcal{P}_1$ (Fig.~\ref{fig:case2c}), and then relocates $b_1$ to the ground plane to bridge the ground and $\ccalP_2$ (Fig.~\ref{fig:case2d}). Note that bringing $b_3$ to the ground requires removing the previous bridge (block $b_2$) between the ground plane and $\ccalP_1$ and then using $b_2$ to bridge planes $\ccalP_2$ and $\ccalP_5$. 
This enables the robot to reach $\mathcal{P}_5$ (Fig.~\ref{fig:case2e}) and move $b_3$ down to the ground plane (Fig.~\ref{fig:case2f}). At this point, the robot and blocks $b_1$ and $b_3$ are all on the ground. The robot temporarily places $b_3$ on the ground to re-bridge the gap between the ground plane and $\ccalP_1$ using $b_1$. Specifically, the robot pushes $b_1$ back next to $\mathcal{P}_1$ (Fig.~\ref{fig:case2g}), grabs $b_3$, climbs on $b_1$ and then to $\ccalP_1$ while carrying $b_3$, and finally places $b_3$ on $\mathcal{P}_1$, thereby bridging the gap between $\mathcal{P}_3$ and $\mathcal{P}_4$ (Fig.~\ref{fig:case2h}).
Finally, the robot climbs from $\mathcal{P}_2$ to $\mathcal{P}_3$, then onto $b_3$, and ultimately reaches $\mathcal{P}_4$ to access the goal region (Fig.~\ref{fig:case2h}).
%
}

%
%
Observe in this example that our method computes \emph{indirect} and non-trivial strategies such as: temporarily bridging an \emph{alternate} gap to access a critical block (e.g., creating intermediate bridges to reach $\ccalP_5$);
performing a \emph{temporary placement} to free the robot for another action and subsequently retrieving and redeploying that block to complete the original objective
(e.g., placing $b_3$ briefly on the ground and then using it to bridge $\ccalP_3$ and $\ccalP_4$); and \emph{reusing} the same block
to serve different purposes at different times (e.g., $b_2$ first bridges
Ground–$\ccalP_1$ and later $\ccalP_2$–$\ccalP_5$).

\subsection{Comparative Experiments against Baselines}\label{subsec:cs1}

\textcolor{black}{In this section, we evaluate the performance of our algorithm when paired with uniform and non-uniform LLM-driven sampling strategies discussed in Section~\ref{sec:PRM} on the case studies introduced in Section~\ref{sec:scalability}. To simplify the analysis, 
\textcolor{black}{we restrict the evaluation to the \emph{exact} \texttt{Candidate} setting (i.e., no false positives)};
for example, combining potential LLM errors with incorrect candidates from \texttt{Candidate} would make it difficult to isolate and evaluate the specific impact of the LLM.} 

\begin{table}[t]
\centering
\caption{Comparison for Horizon $=2$. Reported planning times for: BFS bias, Uniform sampling, and LLM bias.}
\label{tab:horizon2Comparison}
\begin{tabular}{|c|c|c|c|c|}
\hline
\textbf{$|\ccalP^{\text{nm}}|$} & \textbf{$|\ccalB^{\text{m}}|$} 
& \textbf{BFS bias} 
& \textbf{Uniform} 
& \textbf{LLM bias} \\
 &  &  \small{(sec)} &  \small{(sec)} &  \small{(sec)}\\
\hline
3  & 2  & 7.3 & 8   & 16 \\
3  & 5  & 7.9 & 12  & 20 \\
3  & 10 & 7.1 & 18  & 28 \\
6  & 2  & 7.2 & 26  & 21 \\
6  & 5  & 7.3 & 50  & 25 \\
6  & 10 & 7.2 & 80  & 31 \\
9  & 2  & 7.1 & 46  & 25 \\
9  & 5  & 8.2 & 75  & 32 \\
9  & 10 & 7.9 & 110 & 38 \\
\hline
\end{tabular}
\end{table}

\begin{table}[t]
\centering
\caption{Comparison for Horizon $=6$. Reported planning times for: BFS bias, Uniform sampling, and LLM bias.}
\label{tab:horizon6Comparison}
\begin{tabular}{|c|c|c|c|c|}
\hline
\textbf{$|\ccalP^{\text{nm}}|$} & \textbf{$|\ccalB^{\text{m}}|$} 
& \textbf{BFS bias} 
& \textbf{Uniform} 
& \textbf{LLM bias} \\
 &  &  \small{(sec)} &  \small{(sec)} &  \small{(sec)}\\
\hline
6  & 3  & 7.6 & N/A & 107 \\
6  & 6  & 7.5 & N/A & 121 \\
6  & 10 & 7.9 & N/A & 205 \\
9  & 3  & 8.1 & N/A & 144 \\
9  & 6  & 7.7 & N/A & 156 \\
9  & 10 & 7.9 & N/A & 272$^{*}$ \\
12 & 3  & 7.8 & N/A & 160 \\
12 & 6  & 8.3 & N/A & 263$^{*}$ \\
12 & 10 & 8.1 & N/A & 209 \\
\hline
\end{tabular}
\end{table}

\textcolor{black}{The comparative results for $H_{\text{min}}=2$ and $H_{\text{min}}=6$ are shown in Tables~\ref{tab:horizon2Comparison} and \ref{tab:horizon6Comparison}, respectively. Across both tables, a consistent picture emerges. For $H_{\text{min}}=2$ (Table~\ref{tab:horizon2Comparison}), the runtime for \emph{uniform-sampling} grows rapidly with environment size (e.g., from $8$ secs in environments with $3$ planes and $2$ movable blocks to $110$ secs in environments with $9$ planes and $10$ movable blocks) even though the horizon $H_{\text{min}}$ is fixed. This occurs because the space that needs to be explored and, specifically, the number of triplets $s$, grows exponentially as the number of blocks and planes increases. Thus, with no guidance on which triplets to prioritize, the runtimes significantly increase.
The \emph{LLM-biased sampling strategy} performs worse than the uniform-sampling framework when the number of non-movable planes and movable blocks is small. For instance, in environments with $3$ planes and $2$ movable blocks, the runtime of the LLM-biased strategy is $16$ secs, compared to $8$ secs required by uniform sampling. However, the LLM-biased strategy outperforms uniform sampling as the environment size increases: in environments with $9$ planes and $10$ movable blocks, its runtime is $38$ secs versus $110$ secs for uniform sampling.
Our method outperforms both baselines, achieving significantly lower runtimes ($\approx 7$ secs) that remain nearly constant across all case studies. Note that the LLM generated plan $\Pi$ was initially feasible in all nine cases providing `feasible' directions to the sampling strategy 
(just as the BFS planner did as discussed in Section \ref{sec:scalability}); however, it is slower than our method \textcolor{black}{(and the uniform-based strategy in small-scale problems)} due to the additional inference cost of querying GPT~5.1.} 
\textcolor{black}{We also note that the horizons of the returned plans were identical across all methods: every approach (BFS bias, uniform sampling, and LLM bias) successfully recovered the shortest-horizon plan with $H=H_{\min}=2$. 
The tree $\ccalT$ built using Alg. \ref{alg:sampling_planner} by both the BFS-biased and LLM-biased planners resulted in compact trees—between $3$ and $5$ nodes in all cases—reflecting their strong directional guidance and minimal need for exploration. 
In contrast, the uniform-sampling planner required substantially larger trees, with $|\ccalV_T|$ increasing from $4$ nodes in the simplest case to $57$ nodes in the largest environment, mirroring the growth in runtime and the exponential expansion of the search space.}

As for the case studies with $H_{\text{min}}=6$ (Table~\ref{tab:horizon6Comparison}), we observed that \emph{uniform sampling} fails to find a solution within the iteration budget ($K_\text{max}$) on all nine instances. \textcolor{black}{Moreover, the runtimes for the LLM-based sampling strategy are significantly larger than the corresponding BFS-biased runs$-$ranging from $107$ secs to $272$ secs$-$primarily because of the inherent latency of the LLM in generating each high-level plan $\Pi$; this cost dominates even when the symbolic plan it produces is correct. In some cases—marked with an asterisk in Table~\ref{tab:horizon6Comparison}—the LLM additionally produced an incorrect symbolic action within $\Pi$, even though the \texttt{Candidate} function in this comparison contains only correct tuples. Such errors stem from the natural fallibility of LLMs when reasoning over long-horizon symbolic tasks. When an incorrect action is proposed, the geometry checks in Alg.~\ref{alg:sample_move} reject it, and we re-query the LLM to obtain a revised plan, analogous to the failure-refresh mechanism in Section~\ref{subsec:fail-refresh}. These re-planning events introduce extra overhead, explaining the noticeably higher runtimes in the starred entries. In contrast, as discussed in Section~\ref{sec:scalability}, our proposed method combined with the BFS planner achieves runtimes that remain consistently around $7.5$–$8.3$ secs across all evaluated problem sizes.}
\textcolor{black}{For all evaluated instances, the horizon of the plans computed \textcolor{black}{using BFS-driven and LLM-driven} was $H=H_{\min}=6$. 
Across problem sizes, both our BFS-biased planner and the LLM-biased planner produced similarly small trees, with $|\ccalV_T|$ ranging from $7$ to $13$ nodes reflecting the strong directional bias in both methods. }

\subsection{Effect of Bias Parameters \textcolor{black}{and Correctness of Plan $\Pi$}}\label{subsec:cs2}
\textcolor{black}{In this section, we evaluate the sensitivity of our algorithm to the parameters $p_1, p_2, p_3$, and $p_{\text{plan}}$ used in the proposed BFS-driven non-uniform sampling strategy. Specifically, we consider the following values:
$p_{\text{plan}} \in \{0.2, 0.5, 0.8\}$ (used in \eqref{eq:fv_define}) and
$(p_1, p_2, p_3) \in \{(0.2, 0.4, 0.4),\ (0.5, 0.25, 0.25),\ (0.9, 0.05, 0.05)\}$ (used in \eqref{eq:fs_define}).
For each parameter configuration, we report the average runtime required to compute the first feasible solution, along with the corresponding number of tree nodes generated by Alg.~\ref{alg:sampling_planner}, denoted by $|\mathcal{V}_T|$.}

\textcolor{black}{First, we consider the case study discussed in Section \ref{sec:scalability}, with $|\mathcal{P}^{\text{nm}}|=6$, $|\mathcal{B}^{\text{m}}|=3$, and $H_{\text{min}}=6$. To isolate the effect of biasing from potentially incorrect outputs of the function \texttt{Candidate}, we configure our planner using only \emph{correct} outputs of this function, i.e., blocks that can indeed bridge gaps. In this case study, using an \emph{exact} \texttt{Candidate} function, the BFS-generated plan~$\Pi$ was initially feasible in all trials.}\footnote{\textcolor{black}{Note that in a general a plan $\Pi$ may still infeasible even with `exact' \texttt{Candidate} function; see Section \ref{subsec:fail-refresh}.}} 
The results are shown in Table~\ref{tab:bias_sensitivity_h6_p6_b3}. We observe a consistent trend: increasing bias toward the BFS plan \(\Pi\), i.e., larger $p_{\text{plan}}$ and \(p_1\), \emph{reduces planning time} as the tree prioritizes expansions toward correct \(\Pi\)-recommended triplets/directions. In parallel, \(|\ccalV_T|\) \emph{drops sharply} as bias toward \(\Pi\) increases, aligning with the intuition that stronger bias toward correct high-level plans $\Pi$ may accelerate plan synthesis.

\begin{table}[t]
\centering
\caption{Bias sensitivity for $H{=}6$, 6 planes, 3 blocks (correct candidates only).} 
\label{tab:bias_sensitivity_h6_p6_b3}
\begin{tabular}{|c|c|c|c|}
\hline
\multirow{2}{*}{$(p_1,p_2,p_3)$} & \multicolumn{3}{c|}{$p_{\text{plan}}$; \textbf{time (s), ($|\ccalV_T|$ nodes)}} \\
\cline{2-4}
 & \textbf{0.2} & \textbf{0.5} & \textbf{0.8} \\
\hline
(0.2, 0.4, 0.4)   & 14.1, (34)  & 11.4, (19)  & 9.7, (13)  \\
(0.5, 0.25, 0.25) & 10.7, (17)  & 9.6, (12)   & 8.75, (10)  \\
(0.9, 0.05, 0.05) & 9.5, (15)   & 8.7, (9)    & 8.5, (7)  \\
\hline
\end{tabular}
\end{table}

\begin{table}[t]
\centering
\caption{Bias sensitivity for $H{=}2$, 3 planes, 2 blocks, with one infeasible candidate and a fixed high-level plan $\Pi$ (no re-planning).}
\label{tab:bias_sensitivity_h2_p3_b2_bad}
\begin{tabular}{|c|c|c|c|}
\hline
\multirow{2}{*}{$(p_1,p_2,p_3)$} & \multicolumn{3}{c|}{$p_{\text{plan}}$; \textbf{time (s)}, ($|\ccalV_T|$ nodes)} \\
\cline{2-4}
 & \textbf{0.2} & \textbf{0.5} & \textbf{0.8} \\
\hline
(0.2, 0.4, 0.4)   & 7.8, (4)  & 12.8, (4) & 22.1, (5) \\
(0.5, 0.25, 0.25) & 10.6, (4) & 15.3, (3) & 28.5, (4) \\
(0.9, 0.05, 0.05) & 16.6, (3) & 35.2, (4) & 57.0, (4) \\
\hline
\end{tabular}
\end{table}

\textcolor{black}{Second, to complement the above case—which isolates the effect of biasing when $\Pi$ is correct—we next examine a contrasting scenario where biasing interacts with an \emph{infeasible} high-level plan. }
Specifically, here, we consider a smaller instance with horizon $H_{\text{min}}{=}2$, three non-movable planes, and two movable blocks.  For one of the gaps between the non-movable planes, only a single block can physically create a bridge, but the function \texttt{Candidate} returns two options: one valid and one infeasible. This results in plan $\Pi$ that contains an infeasible triplet.
We deliberately disable the re-planning mechanism so that the BFS planner is invoked once, and the resulting plan $\Pi$ is held fixed even if the corresponding triplet never succeeds.  The results are summarized in Table~\ref{tab:bias_sensitivity_h2_p3_b2_bad}.  Unlike Table~\ref{tab:bias_sensitivity_h6_p6_b3}, here increasing the bias towards the (infeasible) plan $\Pi$ consistently \emph{increases} the planning time: higher $p_1$ and larger $p_{\text{plan}}$ make the planner spend most of its effort expanding the tree toward infeasible directions, leaving too little exploration to discover the valid candidate.  In the most extreme setting $(p_1,p_2,p_3){=}(0.9,0.05,0.05)$ and $p_{\text{plan}}{=}0.8$, the average planning time (without re-planning $\Pi$) reaches $57\,$s.  However, if we allow on-the-fly revision of $\Pi$, as discussed in Section \ref{subsec:fail-refresh}, the same parameter setting solves the problem in $12\,$s with a single re-planning of $\Pi$. \textcolor{black}{We note that \(|\ccalV_T|\) remains nearly unchanged across bias settings, since biased sampling toward an infeasible direction produces many rejected samples but few successful node insertions.} This experiment highlights that aggressive bias is highly beneficial when the high-level plan is correct, but can be detrimental under incorrect candidate unless coupled with the repair mechanism discussed in Section \ref{subsec:fail-refresh}.

\begin{rem}[Case-Study Selection]\label{rem:biasEffect}
    \textcolor{black}{For completeness, we note that the two case-study setups used above (horizon $H_{\text{min}}{=}6$ with \textcolor{black}{exact} \texttt{Candidate} function (no false positives), and horizon $H_{\text{min}}{=}2$ with \textcolor{black}{over-approximate} \texttt{Candidate} function) are chosen intentionally. 
    For the $H_{\text{min}}{=}6$ case, if an incorrect candidate 
    is introduced and re-planning is disabled, the planner fails to find a solution \textcolor{black}{within $K_{\text{max}}=10,000$ iterations}, under all bias configurations—making a bias-sensitivity table uninformative.  
    Conversely, for $H_{\text{min}}{=}2$, when all candidates are correct, the problem is so short-horizon that all bias settings yield comparable runtimes (on the order of $5$–$6$ seconds) and very small number of nodes, offering no meaningful trend to report. For these reasons, we present only the informative cases above.}
\end{rem}

\begin{figure}[t]
    \centering
    \includegraphics[width=\linewidth]{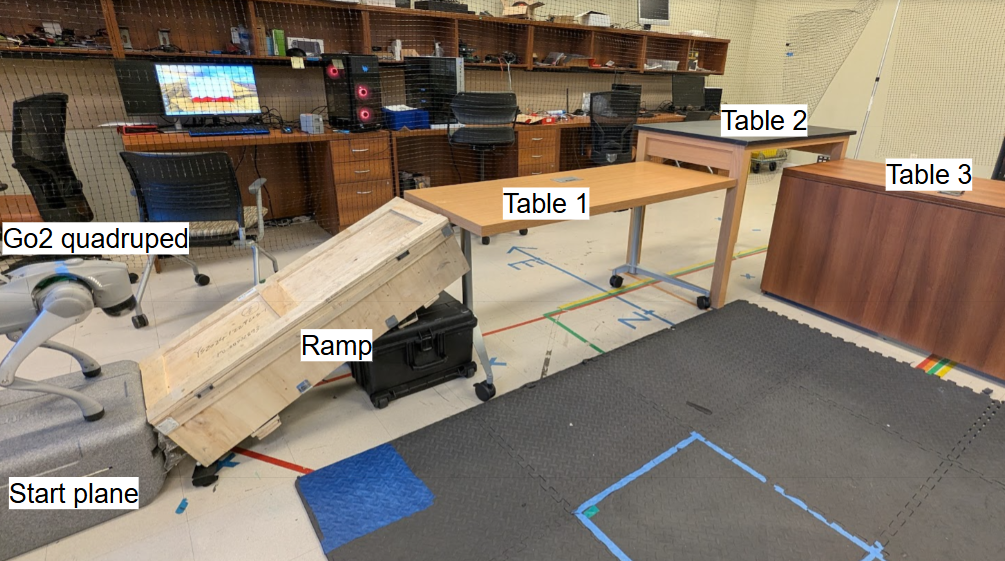}
    \caption{Hardware demonstration setup.
    The Go2 starts at the bottom of the ramp and needs to reach Table 3. 
    }
    \label{fig:hardware_setup}
\end{figure}

\begin{figure*}[t]
    \centering
    \subfigure[]{\label{fig:hw1}\includegraphics[width=0.3\textwidth]{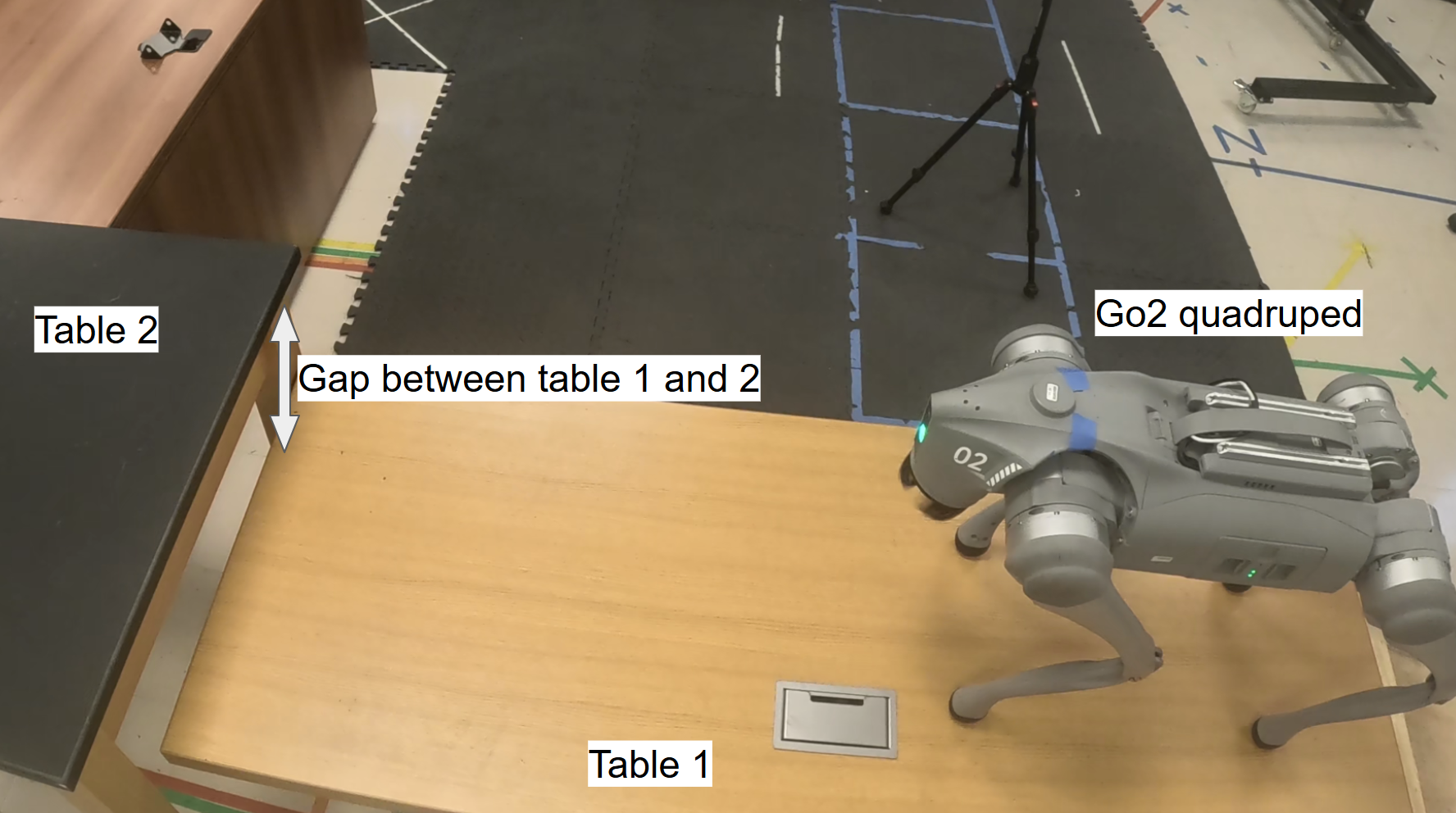}} \hfill
    \subfigure[]{\label{fig:hw2}\includegraphics[width=0.3\textwidth]{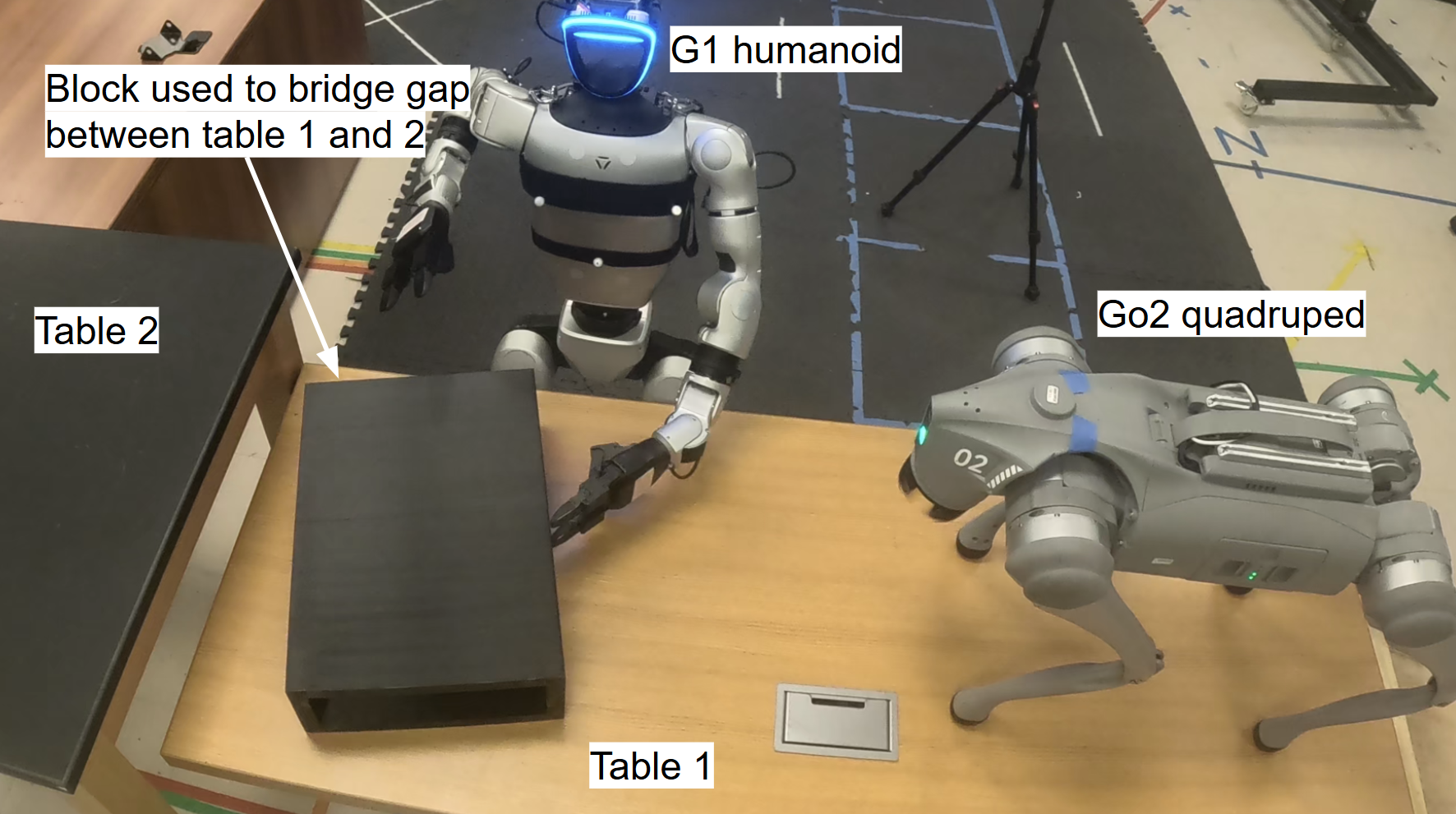}} \hfill
    \subfigure[]{\label{fig:hw3}\includegraphics[width=0.3\textwidth]{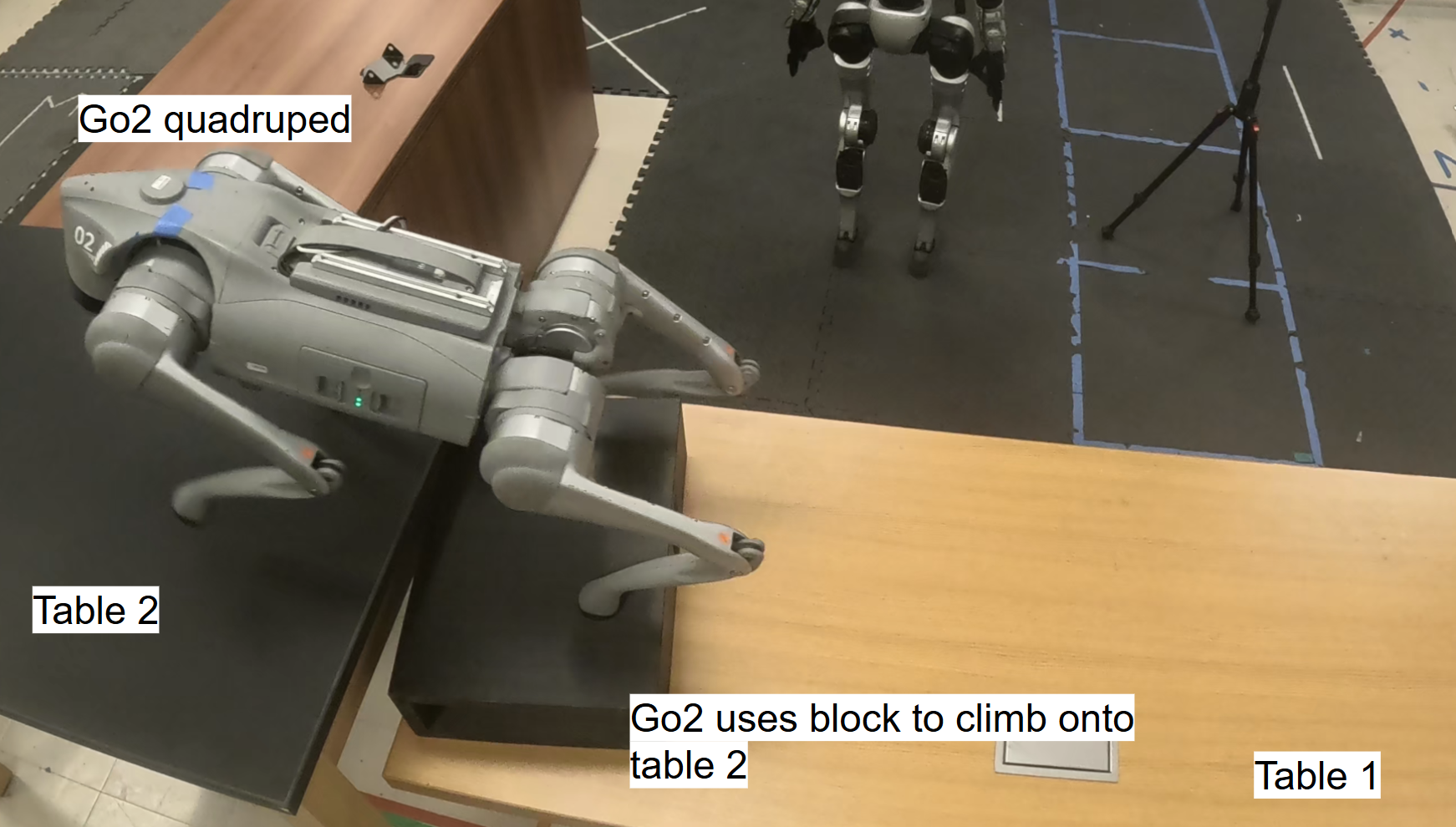}} \hfill
    \subfigure[]{\label{fig:hw4}\includegraphics[width=0.3\textwidth]{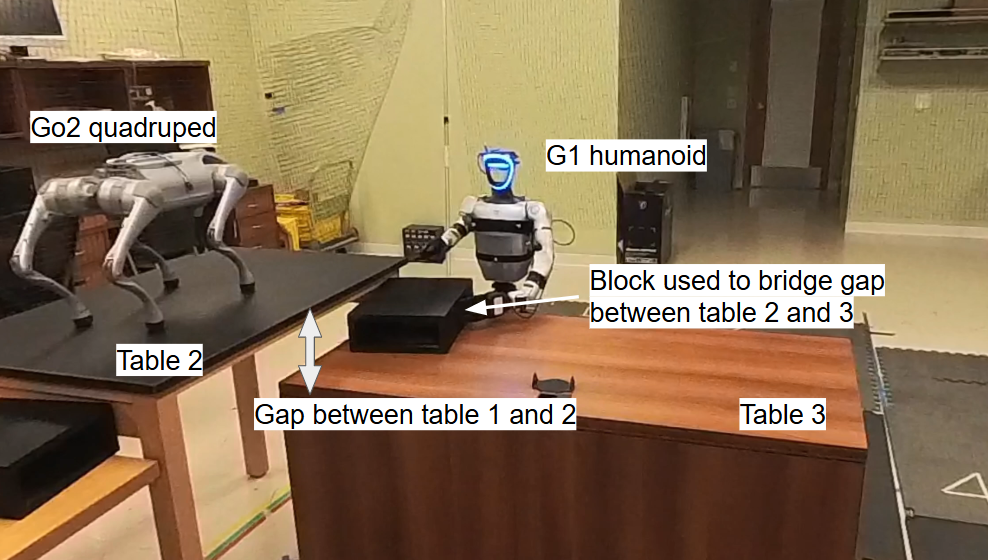}} \hfill
    \subfigure[]{\label{fig:hw5}\includegraphics[width=0.3\textwidth]{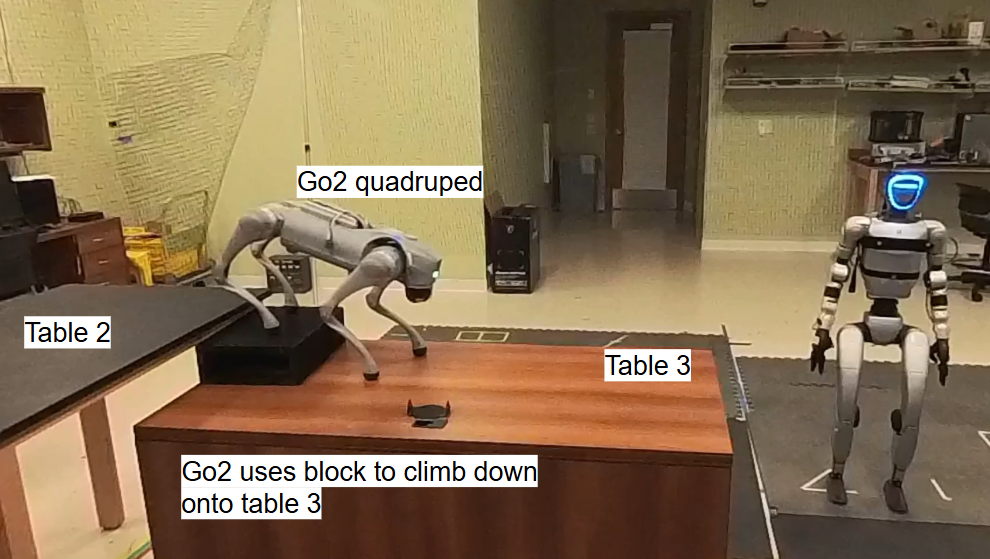}} \hfill
    \subfigure[]{\label{fig:hw6}\includegraphics[width=0.3\textwidth]{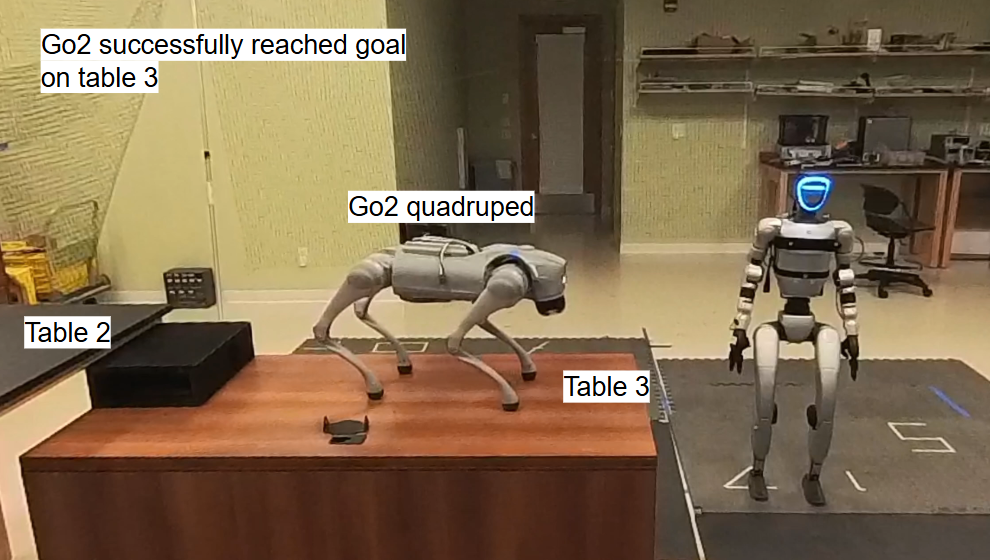}} \hfill

    \caption{Hardware demonstration.
    Fig. \ref{fig:hw1}: The Go2 reaches Table~1, but cannot climb on Table~2. Fig. \ref{fig:hw2}: The humanoid then places the first block to bridge the gap between tables 1 and 2. Fig. \ref{fig:hw3}: The quadruped uses the block to climb on to table 2. 
    \textcolor{black}{Figs. \ref{fig:hw4}-\ref{fig:hw6}: Similarly the humanoid then bridges the Table~2–Table~3 gap with a second block, enabling the Go2 to reach Table~3.}
    }

    \label{fig:hardware_demo}
\end{figure*}



\subsection{Hardware Demonstrations}\label{subsec:cs3}

\textcolor{black}{\textbf{BRiDGE}} is developed for a single robot with navigation and manipulation capabilities encoded by the \texttt{NavigReach} and \texttt{ManipReach} functions. In this section, we demonstrate, via hardware experiments, how these capabilities can be split across two robots: (i) a Unitree Go2 quadruped, which can traverse and climb surfaces satisfying $\delta_H = 15$ cm but cannot grasp objects, and (ii) a Unitree G1 humanoid, which can reliably pick and place blocks but is not equipped with a multi-plane climbing control policy. The planner is instantiated on this heterogeneous pair, with the Go2 executing all navigation actions and the G1 performing all manipulation actions. Robot localization is provided by an \emph{OptiTrack} motion-capture system for real-time 6-DoF tracking.

The workspace consists of four planes: the ground and three non-movable tables; see Fig. \ref{fig:hardware_setup}. Table~1 is connected to the ground via a ramp, and Table~2 lies between Tables 1 and 3 as the tallest one. The height differences from Table~1 to Table~2 and from Table~2 to Table~3 are both approximately $20$ cm. Two 3D-printed movable blocks $b_1$ and $b_2$, of height $12$ cm are placed on Table~4, which lies outside Go2’s workspace. The Go2 is initially located on the ground plane and is tasked with reaching a goal region located on Table~3. Although the Go2 can climb from the ground plane to Table~1 via the ramp, it cannot climb from Table~1 to Table~2 or from Table~2 to Table~3, given their height differences, as the robot can reliably climb vertical steps of up to $\delta_H = 15$ cm. Thus, reaching the goal region requires using the two movable blocks to bridge the gaps between the tables. Recall that the Go2 does not have any manipulation capabilities. However, the G1 can grab both movable blocks and place them on any non-movable plane, enabling Go2 to access initially unreachable surfaces.

To map our proposed single-robot planner to this two-robot setup, we define (i) $\texttt{ManipReach}(b_i, \Omega)$ as the full workspace, and (ii) $\texttt{NavigReach}$ over the ground and Tables 1–3, enforcing the climbing constraint $\delta_H = 15$. Conceptually, this models the Go2—the primary robot in the planner—as being able to grab any block from its current configuration and place it on any plane. In physical execution, these manipulation actions are carried out by the G1 on behalf of the Go2. This allows the heterogeneous hardware pair (Go2 for navigation, G1 for manipulation) to realize the capabilities of the single abstract agent assumed in Algorithm~1 without modifying the planner itself.

\textcolor{black}{We run Algorithm~1 without any high-level biasing: node selection is uniform and triplets are drawn uniformly from the candidate set. The only exception is block-configuration sampling, where we retain the placement bias and set $p_{\text{gap}}=0.9$.} 
The planner returns a feasible plan in 8.1~seconds, consisting of two high-level $\texttt{Move}(\bbb_i,\bbb_i')$ actions requiring first to move  $b_1$ from Table~4 to Table~1, bridging the gap between Table~1 and Table~2, and second to block $b_2$ from Table~4 to Table~3, bridging the gap between Table~2 and Table~3. Thus, during the execution of this plan, the Go2 moves to Table~2 (Fig. \ref{fig:hw1}); the G1 bridges the gap between Tables 1 and 2 using $b_1$ (Fig. \ref{fig:hw2}); the Go2 climbs to Table~2 via $b_1$ (Fig. \ref{fig:hw3}); the G1 bridges the gap between Tables 2 and 3 using $b_2$ (Fig. \ref{fig:hw4}); the Go2 moves to Table~3 via $b_2$ (Fig. \ref{fig:hw5}), reaching its goal area (Fig. \ref{fig:hw6}).

\textbf{Practical Lessons and Current Limitations:} \textcolor{black}{Our hardware demonstration also highlighted several practical limitations that are not yet addressed in the current implementation. \textit{First}, our system assumes that the poses of all blocks and planes are perfectly known. In practice, reliable manipulation would benefit from local perception feedback to refine the relative pose of each object with respect to the robot during grasping and placement, enabling closed-loop correction and more robust execution.
\textit{Second}, our proposed framework assumes that every sampled block placement is statically stable once placed which may not hold in practice especially in environments with uneven and possibly slippery terrains. Relaxing this assumption is nontrivial: stability depends on surface geometry, friction, and mass distribution, and verifying it online requires substantially richer models. Recent work integrates physics simulators into sampling-based planning to evaluate the stability of candidate placements \cite{lee2023object}, and incorporating such checks into our framework is a promising direction for real-world deployment.
\textit{Finally}, implementing our framework exclusively on a humanoid robot remains a major challenge mostly due to the requirement of climbing reliably onto blocks. Executing such behaviors requires reasoning about the precise relative configuration between the robot and the placed block, along with generating feasible footstep trajectories that achieve a stable ascent. Although our hardware experiment delegated all climbing to the Go2 quadruped, extending the system to support humanoid climbing would require integrating whole-body motion planning and contact reasoning.
These limitations reflect natural extensions of our current work, and each represents a clear avenue for future work toward closing the gap between our developed planning capabilities and full real-world autonomy.}

\section{Conclusion} \label{sec:Concl}
\textcolor{black}{In this paper, we considered planning problems for mobile robots tasked with reaching desired goal regions in environments consisting of multiple elevated navigable and static planes and movable objects. Our focus was on initially infeasible tasks, i.e., tasks where the goal region resides on an elevated plane that is unreachable. To enable the robot to reach such goal regions,} \textcolor{black}{we developed a \textcolor{black}{sampling-based} planner for reconfiguring disconnected 3D environments by strategically relocating movable objects to create new traversable connections. The method leverages a symbolic high-level planner, BFS or an LLM, to bias sampling toward promising block placements. We showed that the proposed method is probabilistically complete. Our experiments demonstrate that the approach efficiently solves long-horizon problems across a wide range of complexities and remains robust even when the high-level symbolic guidance is partially incorrect. We further validated the framework through hardware demonstrations using humanoid and quadruped robots. Future work will focus on integrating perception-driven manipulation, stability-aware block placement \textcolor{black}{in uneven terrains}, and extending the system to more general multi-robot capabilities.}



\bibliographystyle{IEEEtran}
\bibliography{SK_bib.bib}

\end{document}